\algrenewcommand\algorithmicforall{\textbf{foreach}}
\newcommand{\mc}{\mathcal}
\newcommand{\mb}{\boldsymbol}
\newcommand{\mbb}{\mathbb}
\DeclareMathOperator*{\tr}{tr}
\newcommand{\bestresult}[2]{\FPeval{\m}{round(#1, 2)}\FPeval{\s}{round(#2, 2)} \StrGobbleLeft{\s}{1}[\s]${\scriptstyle \mb{\m_{{\pm \s}}}}$}
\newcommand{\result}[2]{\FPeval{\m}{round(#1, 2)}\FPeval{\s}{round(#2, 2)} \StrGobbleLeft{\s}{1}[\s]${\scriptstyle \m_{\scriptstyle \pm \s}}$}
\DeclarePairedDelimiter{\floor}{\lfloor}{\rfloor}
\newtheorem{proposition}{Proposition}[section]
\newcommand{\tbf}[1]{\textbf{#1}}
\newcommand{\figref}[1]{Fig.~\ref{#1}}
\newcommand{\tabref}[1]{Table~\ref{#1}}
\newcommand{\nonl}{\renewcommand{\nl}{\let\nl\oldnl}}
\begin{document}

\twocolumn[
\aistatstitle{Graphical Normalizing Flows}

\aistatsauthor{ Antoine Wehenkel \And Gilles Louppe }

\aistatsaddress{ ULiège \And ULiège} ]

\begin{abstract}
Normalizing flows model complex probability distributions by combining a base distribution with a series of bijective neural networks. State-of-the-art architectures rely on coupling and autoregressive transformations to lift up invertible functions from scalars to vectors. 
In this work, we revisit these transformations as probabilistic graphical models, showing they reduce to Bayesian networks with a pre-defined topology and a learnable density at each node.
From this new perspective, we propose the graphical normalizing flow, a new invertible transformation with either a prescribed or a learnable graphical structure. 
This model provides a promising way to inject domain knowledge into normalizing flows while preserving both the interpretability of Bayesian networks and the representation capacity of normalizing flows. We show that graphical conditioners discover relevant graph structure when we cannot hypothesize it. In addition, we analyze the effect of $\ell_1$-penalization on the recovered structure and on the quality of the resulting density estimation. Finally, we show that graphical conditioners lead to competitive white box density estimators.
Our implementation is available at \url{https://github.com/AWehenkel/DAG-NF}.
\end{abstract}

\section{Introduction}
Normalizing flows~\citep[NFs, ][]{NF, tabak2010density, tabak2013family, rippel2013high} have proven to be an effective way to model complex data distributions with neural networks. These models map data points to latent variables through an invertible function while keeping track of the change of density caused by the transformation. In contrast to variational auto-encoders (VAEs) and generative adversarial networks (GANs), NFs provide access to the exact likelihood of the model's parameters, hence offering a sound and direct way to optimize the network parameters. Normalizing flows have proven to be of practical interest as demonstrated by \citet{parallel_wavenet, kim2018flowavenet} and \citet{ prenger2019waveglow} for speech synthesis, by \citet{NF, kingma2016improved} and \citet{berg2018sylvester} for variational inference or by \citet{SNL} and \citet{greenberg2019automatic} for simulation-based inference. Yet, their usage as a base component of the machine learning toolbox is still limited in comparison to GANs or VAEs. Recent efforts have been made by \citet{flows-review} and \citet{flow_review_2} to define the fundamental principles of flow design and by \citet{neural-spline-flows} to provide coding tools for modular implementations. We argue that normalizing flows would gain in popularity by offering stronger inductive bias as well as more interpretability. 

Sometimes forgotten in favor of more data oriented methods, probabilistic graphical models (PGMs) have been popular for modeling complex data distributions while being relatively simple to build and read \citep{PGM-book, johnson2016composing}. Among PGMs, Bayesian networks~\citep[BNs, ][]{Pearl-BN} offer an appealing balance between modeling capacity and simplicity. Most notably, these models have been at the basis of expert systems before the big data era (e.g. \citep{BN-app-1, BN-app-2, BN-app-3}) and were commonly used to merge qualitative expert knowledge and quantitative information together. On the one hand, experts stated independence assumptions that should be encoded by the structure of the network. On the other hand, data were used to estimate the parameters of the conditional probabilities/densities encoding the quantitative aspects of the data distribution. 
These models have progressively received less attention from the machine learning community in favor of other methods that scale better with the dimensionality of the data. 

Driven by the objective of integrating intuition into normalizing flows and the proven relevance of BNs for combining qualitative and quantitative reasoning, we summarize our contributions as follows:
\textbf{(i)} From the insight that coupling and autoregressive transformations can be reduced to Bayesian networks with a fixed topology, we introduce the more general graphical conditioner for normalizing flows, featuring either a prescribed or a learnable BN topology; \textbf{(ii)} We show that using a correct prescribed topology leads to improvements in the modeled density compared to autoregressive methods. When the topology is not known we observe that, with the right amount of $\ell_1$-penalization, graphical conditioners discover relevant relationships; \textbf{(iii)} In addition, we show that graphical normalizing flows perform well in a large variety of density estimation tasks compared to classical black-box flow architectures. 
\section{Background}

\paragraph{Bayesian networks}
A Bayesian network is a directed acyclic graph (DAG) that represents independence assumptions between the components of a random vector. Formally, let $\mb{x} = \left[x_1, \hdots, x_d\right]^T \in \mathbb{R}^d$ be a random vector distributed under $p_{\mb{x}}$. A BN associated to $\mb{x}$ is a directed acyclic graph made of $d$ vertices representing the components $x_i$ of $\mb{x}$. In this kind of network, the absence of edges models conditional independence between groups of components through the concept of d-separation~\citep{d-separation}. A BN is a valid representation of a random vector $\mb{x}$ iff its density can be factorized as 
\begin{align}
    p_{\mb{x}}(\mb{x}) = \prod^d_{i=1}p(x_i|\mathcal{P}_i),\label{eq:BN-fact}
\end{align} 
where  $\mathcal{P}_i = \{j: A_{i,j} = 1 \}$ denotes the set of parents of the vertex $i$ and $A \in \{0, 1\}^{d\times d}$ is the adjacency matrix of the BN. As an example, \figref{fig:mono-step-flows-a} is a valid BN for any distribution over $\mb{x}$ because it does not state any independence and leads to a factorization that corresponds to the chain rule. However, in practice we seek for a sparse and valid BN which models most of the independence between the components of $\mb{x}$, leading to an efficient factorization of the modeled probability distribution. It is worth noting that making hypotheses on the graph structure is equivalent to assuming certain conditional independence between some of the vector's components.


\begin{figure}
    \centering
    \begin{subfigure}{.29\textwidth}
        \centering
        \begin{tikzpicture}[
          node distance=.7cm and 1.cm,
          var_x/.style={draw, circle, text width=.4cm, align=center}
        ]
            \node[var_x] (x1) {$x_1$};
            \node[var_x, right=of x1] (x2) {$x_2$};
            \node[var_x, below=of x1] (x3) {$x_3$};
            \node[var_x, right=of x3] (x4) {$x_4$};
            \path (x1) edge[-latex] (x2);
            \path (x1) edge[-latex] (x3);
            \path (x1) edge[-latex] (x4);
            \path (x2) edge[-latex] (x3);
            \path (x2) edge[-latex] (x4);
            \path (x3) edge[-latex] (x4);
        \end{tikzpicture}
        \caption{} \label{fig:mono-step-flows-a}
    \end{subfigure}~\hspace{-4.8em}
    \begin{subfigure}{.3\textwidth}
    \centering
        \begin{tikzpicture}[
          node distance=.7cm and 1.cm,
          var_x/.style={draw, circle, text width=.4cm, align=center}
        ]
            \node[var_x] (x1) {$x_1$};
            \node[var_x, right=of x1] (x2) {$x_2$};
            \node[var_x, below=of x1] (x3) {$x_3$};
            \node[var_x, right=of x3] (x4) {$x_4$};
            \path (x1) edge[-latex] (x3);
            \path (x1) edge[-latex] (x4);
            \path (x2) edge[-latex] (x3);
            \path (x2) edge[-latex] (x4);
        \end{tikzpicture}
        \caption{} \label{fig:mono-step-flows-b}
    \end{subfigure}
\\
    \begin{subfigure}{.5\textwidth}
    \centering
        \begin{tikzpicture}[
            node distance=.7cm and 1.cm,
            var_z/.style={draw,circle,text width=.4cm,align=center},
            var_x/.style={draw, circle, double, text width=.35cm, align=center}
        ]
            \node[var_x] (x1) {$x_1$};
            \node[var_x, right=of x1] (x2) {$x_2$};
            \node[var_x, below=of x1] (x3) {$x_3$};
            \node[var_x, right=of x3] (x4) {$x_4$};
            \node[var_z, left=of x1] (z1) {$z_1$};
            \node[var_z, right=of x2] (z2) {$z_2$};
            \node[var_z, left=of x3] (z3) {$z_3$};
            \node[var_z, right=of x4] (z4) {$z_4$};
            \path (x1) edge[-latex] (x3);
            \path (x1) edge[-latex] (x4);
            \path (x2) edge[-latex] (x3);
            \path (x2) edge[-latex] (x4);
            \foreach \x in {1,...,4}
                \path (z\x) edge[-latex] (x\x);
        \end{tikzpicture}
        \caption{} \label{fig:mono-step-flows-c}
    \end{subfigure}
    \caption{Bayesian networks equivalent to normalizing flows made of a single transformation step. (\tbf{a}) Autoregressive conditioner. (\tbf{b}) Coupling conditioner. (\tbf{c}) Coupling conditioner, with latent variables  shown explicitly. Double circles stand for deterministic functions of the parents.} \label{fig:mono-step-flows}
\end{figure}

\paragraph{Normalizing flows}
A normalizing flow is defined as a sequence of invertible transformation steps $\mb{g}_k : \mathbb{R}^d \to \mathbb{R}^d$  ($k=1, ..., K$)  composed together to create an expressive invertible mapping $\mb{g} := \mb{g}_1 \circ \dots \circ \mb{g}_K : \mathbb{R}^d \to \mathbb{R}^d$. 
This mapping can be used to perform density estimation, using $\mb{g}(\cdot ;\mb{\theta}): \mbb{R}^d \rightarrow \mbb{R}^d$ to map a sample $\mb{x} \in \mathbb{R}^d$ onto a latent vector $\mb{z} \in \mbb{R}^d$ equipped with a prescribed density $p_{\mb{z}}(\mb{z})$ such as an isotropic Normal.
The transformation $\mb{g}$ implicitly defines a density $p(\mb{x}; \mb{\theta})$ as given by the change of variables formula,
\begin{equation}
    p(\mb{x}; \mb{\theta}) = p_{\mb{z}}(\mb{g}(\mb{x};\mb{\theta})) \left| \det  J_{\mb{g}(\mb{x};\mb{\theta})} \right|, \label{eq:NF_DE}
\end{equation}
where $J_{\mb{g}(\mb{x};\mb{\theta})}$ is the Jacobian of $\mb{g}(\mb{x};\mb{\theta})$ with respect to $\mb x$. The resulting model is trained by maximizing the likelihood of the model's parameters $\mb{\theta}$ given the training dataset $\mb{X} = \{\mb{x}^1, ..., \mb{x}^N\}$.
Unless needed, we will not distinguish between $\mb{g}$ and $\mb{g}_k$ for the rest of our discussion.

In general the steps $\mb{g}$ can take any form as long as they define a bijective map. Here, we focus  on a sub-class of normalizing flows for which the steps can be mathematically described as 
\begin{align}
    \mb{g}(\mb{x}) = \begin{bmatrix}
g^1(x_{1}; \mb{c}^1(\mb{x}))  & \hdots & g^d(x_{d}; \mb{c}^d(\mb{x}))
\end{bmatrix}^T,\label{eq:gnf}
\end{align}
where the $\mb{c}^i$ are the \tbf{conditioners} which role is to constrain the structure of the Jacobian of $\mb{g}$. 
The functions $g^i$, partially parameterized by their conditioner, must be invertible with respect to their input variable $x_i$. They are often referred to as transformers, however in this work we will use the term \tbf{normalizers} to avoid any confusion with attention-based transformer architectures. 


The conditioners examined in this work can be combined with any normalizer. In particular, we consider affine and monotonic normalizers. An affine normalizer $g: \mathbb{R} \times \mathbb{R}^2 \rightarrow \mathbb{R}$ can be expressed as
$g(x;m, s) = x\exp(s) + m$, where $m \in \mathbb{R}$ and $s \in \mathbb{R}$ are computed by the conditioner. There exist multiple methods to parameterize monotonic normalizers \citep{huang2018neural, de2020block, neural-spline-flows, sos}, but in this work we rely on Unconstrained Monotonic Neural Networks~\citep[UMNNs, ][]{UMNN} which can be expressed as
$g(x; \mb{c}) = \int^x_0 f(t, \mb{c}) dt + \beta(\mb{c})$, where $\mb{c} \in \mathbb{R}^{|\mb{c}|}$ is an embedding made by the conditioner and $f: \mathbb{R}^{|\mb{c}| + 1} \rightarrow \mathbb{R}^+$ and  $\beta: \mathbb{R}^d\rightarrow\mathbb{R}$ are two neural networks respectively with a strictly positive scalar output and a real scalar output. \cite{huang2018neural} proved NFs built with autoregressive conditioners and monotonic normalizers are universal density approximators of continuous random variables.

\section{Normalizing flows as Bayesian networks} \label{sec:NF-BN}


\paragraph{Autoregressive conditioners}
Due to computing speed considerations, NFs are usually composed of transformations for which the determinant of the Jacobian can be computed efficiently, as otherwise its evaluation would scale cubically with the input dimension. 
A common solution is to use autoregressive conditioners, i.e., such that 
$$\mb{c}^i(\mb{x}) = \mb{h}^i\left(\begin{bmatrix} x_1 & \hdots & x_{i-1} \end{bmatrix}^T\right)$$ are functions $\mb{h}^i$ of the first $i-1$ components of $\mb{x}$. This particular form constrains the Jacobian of $\mb{g}$ to be lower triangular, which makes the computation of its determinant $\mc{O}(d)$. 

For the multivariate density $p(\mb{x}; \mb{\theta})$ induced by $\mb{g}(\mb{x};\mb{\theta})$ and $p_{\mb{z}}(\mb{z})$, we can use the chain rule to express the joint probability of $\mb{x}$ as a product of $d$ univariate conditional densities, 
\begin{align}
    p(\mb{x}; \mb{\theta}) = p(x_1; \mb{\theta})\prod^{d}_{i=2}p(x_{i}|\mb{x}_{1:i-1}; \mb{\theta}). \label{eq:AF-fact}
\end{align}
When $p_{\mb{z}}(\mb{z})$ is a factored distribution $p_{\mb{z}}(\mb{z}) = \prod^{d}_{i=1}p(z_i)$, we identify that each component $z_i$ coupled with the corresponding functions $g^i$ and embedding vectors $\mb{c}^i$ encode for the conditional $p(x_{i}|\mb{x}_{1:i-1}; \mb{\theta})$. Therefore, and as illustrated in \figref{fig:mono-step-flows-a}, autoregressive transformations can be seen as a way to model the conditional factors of a BN that does not state any independence but relies on a predefined node ordering. This becomes clear if we define $\mathcal{P}_i = \{x_1, \hdots, x_{i-1}\}$ and compare \eqref{eq:AF-fact} with \eqref{eq:BN-fact}.

The complexity of the conditional factors  strongly depends on the ordering of the vector components. While not hurting the universal representation capacity of normalizing flows, the arbitrary ordering used in autoregressive transformations leads to poor inductive bias and to factors that are most of the time difficult to learn. 
In practice, one often alleviates the arbitrariness of the ordering by stacking multiple autoregressive transformations combined with random permutations on top of each other. 

\paragraph{Coupling conditioners}
Coupling layers~\citep{NICE} are another popular type of conditioners that lead to a bipartite structure. The conditioners $\mb{c}^i$ made from coupling layers are defined as
\begin{align*}
    \mb{c}^i(\mb{x}) = 
    \begin{cases} 
    \underline{\mb{h}}^i \quad \text{if} \quad i \leq k\\
    \mb{h}^i\left(\begin{bmatrix} x_1 & \hdots & x_k \end{bmatrix}^T\right) \quad \text{if} \quad i > k\\
    \end{cases}
\end{align*}
where the underlined $\underline{\mb{h}}^i \in \mathbb{R}^{|\mb{c}|}$ denote constant values and $k \in \{1, \hdots, d\}$ is a hyper-parameter usually set to $\floor*{\frac{d}{2}}$. 
As for autoregressive conditioners, the Jacobian of $\mb{g}$ made of coupling layers is lower triangular.  
Again, and as shown in \figref{fig:mono-step-flows-b} and \ref{fig:mono-step-flows-c}, these transformations can be seen as a specific class of BN where $\mathcal{P}_i=\{\}$ for $i \leq k$ and $\mathcal{P}_i = \{1, ..., k\}$ for $i > k$. 
D-separation can be used to read off the independencies stated by this class of BNs such as the conditional independence between each pair in $\mathbf{x}_{k+1:d}$ knowing $\mathbf{x}_{1:k}$. 
For this reason, and in contrast to autoregressive transformations, coupling layers are not by themselves universal density approximators even when associated with very expressive normalizers $g^i$ \citep{wehenkel2020you}.
In practice, these bipartite structural independencies can be relaxed by stacking multiple layers, and may even recover an autoregressive structure. They also lead to useful inductive bias, such as in the multi-scale architecture with checkerboard masking~\citep{RealNVP, GLOW}.

\begin{figure}[h]
\begin{algorithm}[H]
  \caption{Sampling} \label{alg:inversion}
  \small
  \begin{algorithmic}[1]
    \State $\mb{z} \sim \mathcal{N}(\mb{0}, I)$
    \State $\mb{x} \gets \mb{z}$
    \Repeat
      \State $\mb{c}^i \gets \mb{h}^i(\mb{x} \odot A_{i, :}) \quad \forall i \in \{1, \dots, d\}$
      \State $x_i \gets (g^i)^{-1}(z_i; \mb{c}^i, \mb{\theta}) \quad \forall i \in \{1, \dots, d\}$
    \Until{$\mb{x}$ converged}
  \end{algorithmic}
\end{algorithm}
\end{figure}
\section{Graphical normalizing flow}
\subsection{Graphical conditioners}
Following up on the previous discussion, we introduce the graphical conditioner architecture. 
We motivate our approach by observing that the topological ordering (a.k.a. ancestral ordering) of any BN leads to a lower triangular adjacency matrix whose determinant is equal to the product of its diagonal terms (proof in Appendix \ref{app:proof-jac-gnf}). 
Therefore, conditioning factors $\mb{c}^i(\mb{x})$ selected by following a BN adjacency matrix necessarily lead to a transformation $\mb{g}$ whose Jacobian determinant remains efficient to compute. 

Formally, given a BN with adjacency matrix $A \in \{0, 1\}^{d\times d}$, we define the \tbf{graphical conditioner} as being
\begin{align}
    \mb{c}^i(\mb{x}) = \mb{h}^i(\mb{x} \odot A_{i,:}),
\label{eq:gnf}
\end{align}
where $\mb{x} \odot A_{i,:}$ is the element-wise product between the vector $\mb{x}$ and the $i^{\text{th}}$ row of $A$ -- i.e., the binary vector $A_{i, :}$ is used to mask on $\mb{x}$.
NFs built with this new conditioner architecture can be inverted by sequentially inverting each component in the topological ordering.
In our implementation the neural networks modeling the $h^i$ functions are shared to save memory and they take an additional input that one-hot encodes the value $i$. An alternative approach would be to use a masking scheme similar to what is done by \citet{MADE} in MADE as suggested by \cite{DAG-3}.

The graphical conditioner architecture can be used to learn the conditional factors in a continuous BN while elegantly setting structural independencies prescribed from domain knowledge. In addition, the inverse of NFs built with graphical conditioners is a simple as it is for autoregressive and coupling conditioners, Algorithm \ref{alg:inversion} describes an inversion procedure.
We also now note how these two conditioners are just special cases in which the adjacency matrix reflects the classes of BNs discussed in Section~\ref{sec:NF-BN}. 

\subsection{Learning the topology}\label{sec:learn_topo}
In many cases, defining the whole structure of a BN is not possible due to a lack of knowledge about the problem at hand.
Fortunately, not only is the density at each node learnable, but also the DAG structure itself:
defining an arbitrary topology and ordering, as it is implicitly the case for autoregressive and coupling conditioners, is not necessary.

Building upon \textit{Non-combinatorial Optimization via Trace Exponential and Augmented lagRangian for
Structure learning}~\citep[NO TEARS, ][]{DAG-1}, we convert the combinatorial optimization of score-based learning of a DAG into a continuous optimization by relaxing the domain of $A$ to real numbers instead of binary values. That is,
\begin{align}\label{eq:DAG-relax}
\begin{aligned}
\max_{A\in\mathbb{R}^{d\times d}} & \ \ F(A) \\
\text{s.t.} & \ \ \mathcal{G}(A) \in \mathsf{DAGs}
\end{aligned}
\quad \iff \quad 
\begin{aligned}
\max_{A\in\mathbb{R}^{d\times d}} & \ \  F(A) \\
\text{s.t.} & \ \ w(A) = 0,
\end{aligned}
\end{align}
where $\mathcal{G}(A)$ is the graph induced by the weighted adjacency matrix $A$ and $F: \mathbb{R}^{d \times d}\rightarrow \mathbb{R}$ is the log-likelihood of the graphical NF $\mb{g}$ plus a regularization term, i.e., 
\begin{align}
     F(A) = \sum^N_{j=1}\log\big(p(\mb{x}^j; \mb{\theta})\big) + \lambda_{\ell_1} ||A||_1, \label{eq:score}
\end{align}

where $\lambda_{\ell_1}$ is an $\ell_1$-regularization coefficient and $N$ is the number of training samples $\mb{x}^i$. The likelihood is computed as
\begin{align}
    p(\mb{x}; \mb{\theta})=& p_{\mb{z}}(\mb{g}(\mb{x}; \mb{\theta})) \prod_{i=1}^{d} \left| \frac{\partial g^i(x_i;\mb{h}^i(\mb{x} \odot A_{i,:}), \mb{\theta})}{\partial x_i} \right|. \nonumber
\end{align}
The function $w(A)$ that enforces the acyclicity is expressed as suggested by \citet{DAG-2} as
$$w(A) := \tr\left((I + \alpha A)^d\right) - d \propto	\tr\left(\sum^d_{k=1} \alpha^k A^k\right),$$
where $\alpha \in \mathbb{R}_+$ is a hyper-parameter that avoids exploding values for $w(A)$.
In the case of positively valued $A$, an element $(i, j)$ of $A^k = \underbrace{A A...A}_{\text{k terms}}$ is non-null if and only if there exists a path going from node $j$ to node $i$ that is made of exactly $k$ edges. Intuitively $w(A)$ expresses to which extent the graph is cyclic. Indeed, the diagonal elements $(i, i)$ of $A^k$ will be as large as there are many paths made of edges that correspond to large values in $A$ from a node to itself in $k$ steps.

In comparison to our work, \citet{DAG-1} use a quadratic loss on the corresponding linear structural equation model (SEM) as the score function $F(A)$. By attaching normalizing flows to topology learning, our method has a continuously adjustable level of complexity and does not make any strong assumptions on the form of the conditional factors. 

\subsection{Stochastic adjacency matrix}
In order to learn the BN topology from the data, the adjacency matrix must be relaxed to contain reals instead of booleans. It also implies that the graph induced by $A$ does not formally define a DAG during training. Work using NO TEARS to perform topology learning directly plug the real matrix $A$ in \eqref{eq:DAG-relax} \citep{DAG-1, DAG-2, DAG-3, DAG-4} however this is inadequate because the quantity of information going from node $j$ to node $i$ does not continuously relate to the value of $A_{i,j}$. Either the information is null if $A_{i,j} = 0$ or it passes completely if not. Instead, 
we propose to build the stochastic pseudo binary valued matrix $A'$ from $A$, defined as
$$A'_{i, j} = \frac{e^{\frac{\log(\sigma(A_{i,j}^2)) + \gamma_1}{T}}}{e^{\frac{\log(\sigma(A_{i,j}^2)) + \gamma_1}{T}} + e^{\frac{\log(1 - \sigma(A_{i,j}^2)) + \gamma_2}{T}}}, $$
where $\gamma_1, \gamma_2 \sim \text{Gumbel}(0, 1)$ and $\sigma(a) = 2(\text{sigmoid}(2a^2) - \frac{1}{2})$ normalizes the values of $A$ between $0$ and $1$, being close to $1$ for large values and close to zero for values close to $0$. The hyper-parameter $T$ controls the sampling temperature and is fixed to $0.5$ in all our experiments. 
In contrast to directly using the matrix $A$, this stochastic transformation referred to as the Gumbel-Softmax trick in the literature \citep{gumbel1, gumbel2} allows to create a direct and continuously differentiable relationship between the weights of the edges and the quantity of information that can transit between two nodes. Indeed, the probability mass of the random variables $A'_{i,j}$ is mainly located around $0$ and $1$, and its expected value converges to $1$ when $A_{i,j}$ increases. 

\subsection{Optimization} \label{sec:optim}
We rely on the augmented Lagrangian approach to solve the constrained optimization problem \eqref{eq:DAG-relax} as initially proposed by \citet{DAG-1}. This optimization procedure requires solving iteratively the following sub-problems:
\begin{align}
    \max_{A} \mathbb{E}_{\gamma_1, \gamma_2}\left[F(A)\right] - \lambda_t w(A) - \frac{\mu_t}{2} w(A)^2, \label{eq:loss}
\end{align}
where $\lambda_t$ and $\mu_t$ respectively denote the Lagrangian multiplier and penalty coefficients of the sub-problem $t$.

 We solve these optimization problems with mini-batch stochastic gradient ascent. We update the values of $\gamma_t$ and $\mu_t$ as suggested by \cite{DAG-2} when the validation loss does not improve for 10 consecutive epochs. Once $w(A)$ equals $0$, the adjacency matrix is acyclic up to numerical errors. We recover an exact DAG by thresholding the elements of $A$ while checking for acyclicity with a path finding algorithm. 
We provide additional details about the optimization procedure used in our experiments in Appendix~\ref{app:optim}.

\section{Experiments}
In this section, we demonstrate some applications of graphical NFs in addition to unifying NFs and BN under a common framework. We first demonstrate how pre-loading a known or hypothesized DAG structure can help finding an accurate distribution of the data. Then, we show that learning the graph topology leads to relevant BNs that support generalization well when combined with $\ell_1$-penalization. Finally, we demonstrate that mono-step normalizing flows made of graphical conditioners are competitive density estimators. 
\subsection{On the importance of graph topology}
The following experiments are performed on four distinct datasets, three of which are synthetic, such that we can define a ground-truth minimal Bayesian network, and the fourth is a causal protein-signaling network derived from single-cell data \citep{proteins}. Additional information about the datasets are provided in \tabref{tab:ds_desc} and in Appendix \ref{app:top_dataset}. For each experimental run we first randomly permute the features before training the model in order to compare autoregressive and graphical conditioners fairly.
\paragraph{Prescribed topology}
\begin{table}
    \caption{Datasets description. $d$=Dimension of the data. $V$=Number of edges in the ground truth Bayesian Network.} \label{tab:ds_desc}
    \centering
    \scriptsize
    \setlength{\tabcolsep}{2pt}
    \renewcommand{\arraystretch}{1.5}
    
    \begin{tabular}{l l l l l}
        \hline\hline
        Dataset & $d$ & $V$ & Train & Test\\ \hline
        \tbf{Arithmetic Circuit} & $8$ & $8$ & $10,000$ & $5,000$\\
        \tbf{8 Pairs}& $16$ & $8$  & $10,000$ & $5,000$\\
        \tbf{Tree}  & $7$ & $8$ & $10,000$ & $5,000$\\
        \tbf{Protein} & $11$ & $20$ & $6,000$ & $1,466$\\
        \hline
        \tbf{POWER} & $6$ & $\leq 15$ & $1,659,917$ & $204,928$\\
        \tbf{GAS} & $8$ & $\leq 28$ & $852,174$ & $105,206$\\
        \tbf{HEPMASS} & $21$ & $\leq 210$ & $315,123$ & $174,987$\\
        \tbf{MINIBOONE} & $43$ & $\leq 903$ & $29,556$ & $3,648$\\
        \tbf{BSDS300} & $63$ & $\leq 1,953$ & $1,000,000$ & $250,000$\\
        \hline \hline
    \end{tabular}
\end{table}
\begin{table}
    \caption{Graphical vs autoregressive conditioners combined with monotonic normalizers. Average log-likelihood on test data over 5 runs, under-scripted error bars are equal to the standard deviation.
    Results are reported in nats; higher is better. 
    The best performing architecture for each dataset is written in bold. \emph{Graphical conditioners clearly lead to improved density estimation when given a relevant prescribed topology in 3 out of the 4 datasets.}} \label{tab:known_topo}
    \centering
    \scriptsize
    \setlength{\tabcolsep}{1pt}
    \renewcommand{\arraystretch}{1.5}
    
    \begin{tabular}{l c c c c}
        \hline\hline
        Conditioner & Graphical & Autoreg. \\  \hline
        \tbf{Arithmetic Circuit} & \bestresult{3.986817}{0.155741} & \result{3.059781}{0.381778} \\
        \tbf{8 Pairs}& \bestresult{-9.398858}{0.061945} & \result{-11.503608}{0.271145}\\
        \tbf{Tree}  & \bestresult{-6.845759}{0.016316} & \result{-6.960843}{0.054900}\\
        \tbf{Protein} & \result{6.460093}{0.075050} & \bestresult{7.516195}{0.099126}\\
        \hline
        \hline
    \end{tabular}
    \vspace{-1em}
\end{table}

Rarely do real data come with their associated Bayesian network however oftentimes experts want to hypothesize a network topology and to rely on it for the downstream tasks. Sometimes the topology is  known a priori, as an example the sequence of instructions in stochastic simulators can usually be translated into a graph topology (e.g. in probabilistic programming \citep{proba_prog, wood}). In both cases, graphical conditioners allow to explicitly take advantage of this to build density estimators while keeping the assumptions made about the network topology valid. 

\tabref{tab:known_topo} presents the test likelihood of autoregressive and graphical normalizing flows on the four datasets. The flows are made of a single step and use monotonic normalizers. The neural network architectures are all identical. Further details on the experimental settings as well as additional results for affine normalizers are respectively provided in Appendix \ref{app:top_dataset_descri} and \ref{app:top_dataset_exp}. \emph{We observe how using correct BN structures lead to good test performance in \tabref{tab:known_topo}.} 
Surprisingly, the performance on the protein dataset are not improved when using the ground truth graph. We observed during our experiments (see Appendix \ref{app:top_dataset_exp}) that learning the topology from this dataset sometimes led to improved density estimation performance with respect to the ground truth graph. The limited dataset size does not allow us to answer if this comes from to the limited capacity of the flow and/or from the erroneous assumptions in the ground truth graph. However, we stress out that the graphical flow respects the assumed causal structure in opposition to the autoregressive flow.


\paragraph{Learning the topology}
\begin{figure}
    \begin{subfigure}{.27\textwidth}
    \includegraphics[width=.99\textwidth]{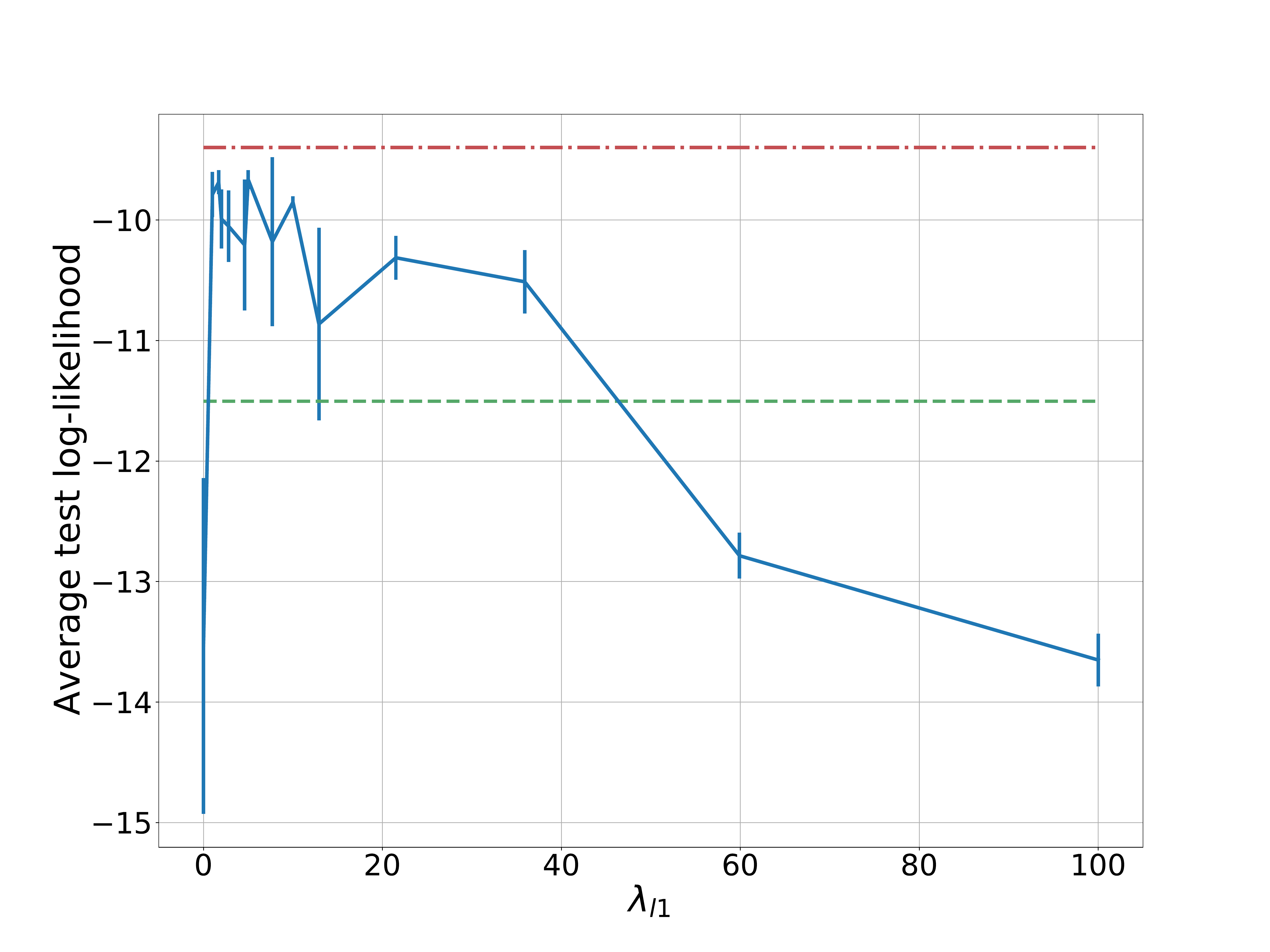}
    \caption{}\label{fig:8_MIX_L1_test}
    \end{subfigure}
    \begin{subfigure}{.2\textwidth}~
    \centering
        \includegraphics[width=1.\textwidth]{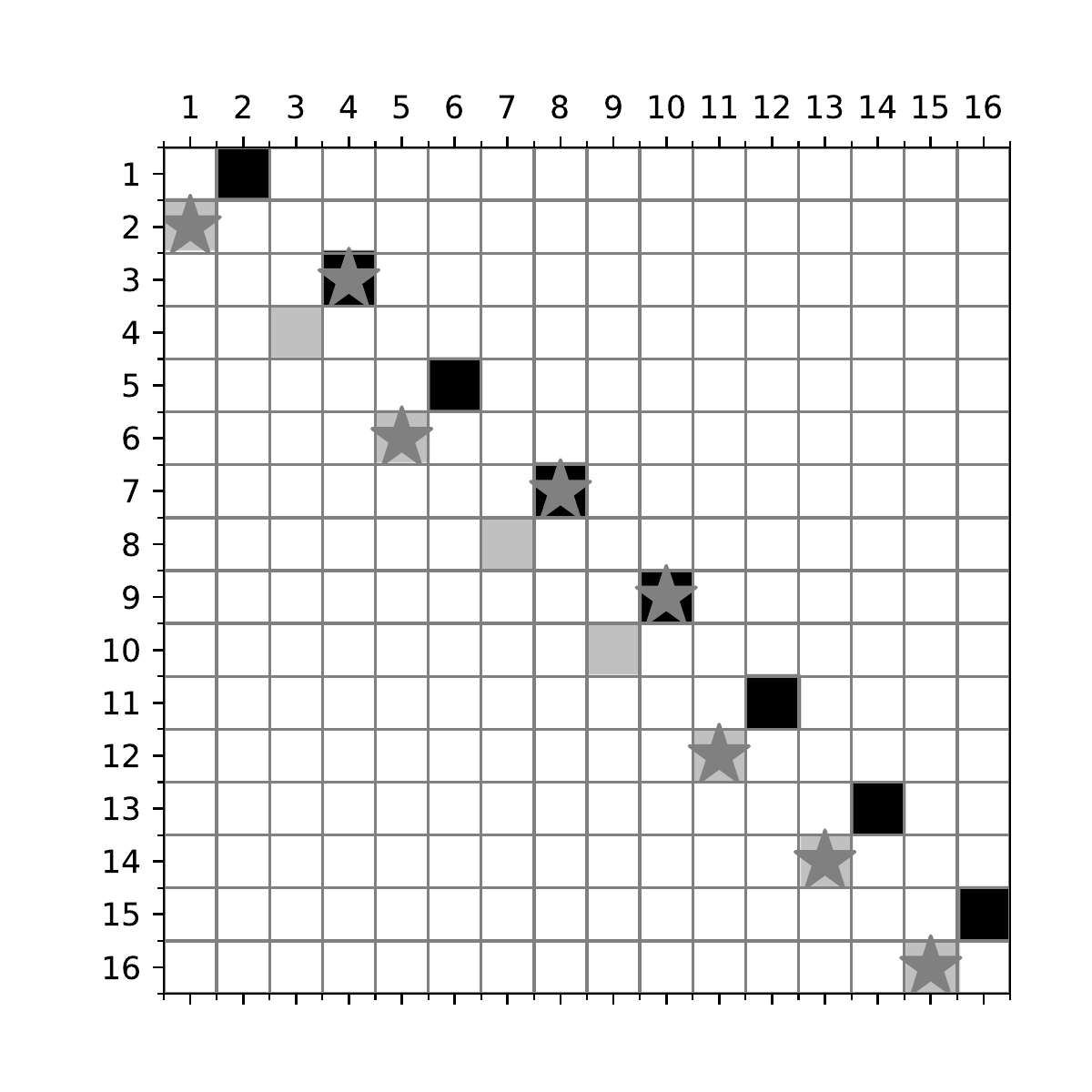}
    \caption{}\label{fig:8_MIX_L1_DAG}
    \end{subfigure}
    \caption{\tbf{(a)}: Test log-likelihood as a function of $\ell_1$-penalization on \tbf{8 pairs} dataset. The upper bound is the average result when given a prescribed topology, the lower bound is the result with an autoregressive conditioner. \emph{Learning the right topology leads to better results than autoregressive conditioners.} \tbf{(b)}: The blacked cells corresponds to one correct topology of the 8 pairs dataset and the grey cells to the transposed adjacency matrix. The stars denote the edges discovered by the graphical conditioner when trained with $\lambda_{\ell_1} = 4$. \emph{The optimization discovers a relevant BN (equivalent to the ground truth)}.}
    \label{fig:8_MIX_L1}
\end{figure}
\begin{figure}
    \centering
    \includegraphics[width=.2\textwidth]{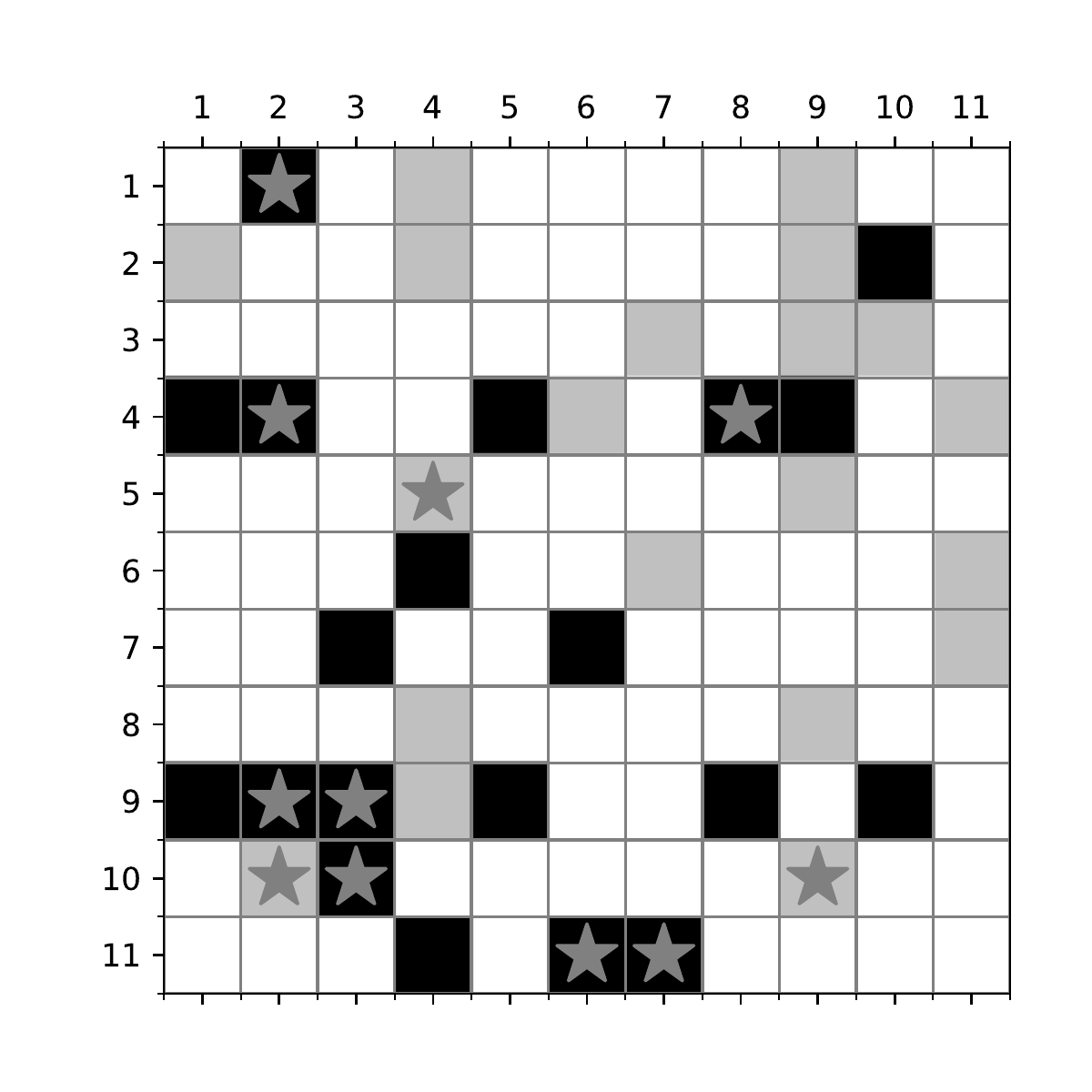}
    \caption{The adjacency matrix of the protein interaction network. The blacked cells are the directed connections proposed by domain experts and the grey is the transposed adjacency matrix. The stars denote the edges discovered by the graphical conditioner when trained with  $\lambda_{\ell_1} = 12$. \emph{In a realistic setting, the optimization leads to a graph that shares a lot with the one designed by experts.}}    \label{fig:protein_net}

\end{figure}
The $\lambda_{\ell_1}$ coefficient introduced in \eqref{eq:score} controls the sparsity of the optimized BN, allowing to avoid some spurious connections. We now analyze the effect of sparsity on the density estimation performance. \figref{fig:8_MIX_L1_test} shows the test log-likelihood as a function of the $\ell_1$-penalization on the 8 pairs dataset. We observe that the worst results are obtained when there is no penalization. Indeed, in this case the algorithm finds multiple spurious connections between independent vector's components and then overfits on these incorrect relationships. Another extreme case shows up when the effect of penalization is too strong, in this case, the learning procedure underfits the data because it ignores too many relevant connections. It can also be concluded from the plot that the optimal $\ell_1$-penalization performs on par with the ground truth topology, and certainly better than the autoregressive conditioner. Additional results on the other datasets provided in Appendix \ref{app:top_dataset} lead to similar conclusions.

\paragraph{Protein network}
The adjacency matrix discovered by optimizing \eqref{eq:loss} (with $ \lambda_{\ell_1} = 12$) on the protein network dataset is shown in \figref{fig:protein_net}. All the connections discovered by the method correspond to ground truth connections. Unsurprisingly, their orientation do not always match with what is expected by the experts. Overall we can see that the optimization procedure is able to find relevant connections between variables and avoids spurious ones when the $\ell_1$-penalization is optimized. Previous work on topology learning such as \cite{DAG-1,DAG-2, DAG-3} compare their method to others by looking at the topology discovered on the same dataset. Here we do not claim that learning the topology with a graphical conditioner improves over previous methods. Indeed, we believe that the difference between the methods mainly relies on the hypothesis and/or inductive biased made on the conditional densities. Which method would perform better is dependent on the application. Moreover, the Structural Hamiltonian Distance (SHD) and the Structural Inference Distance (SID), often used to compare BNs, are not well motivated in general. On the one hand the SHD does not take into account the independence relationships modeled with Bayesian networks, as an example the SHD between the graph found on 8 pairs dataset of \figref{fig:8_MIX_L1_DAG} and one possible ground truth is non zero whereas the two BNs are equivalent. On the other hand the SID aims to compare causal graphs whereas the methods discussed here do not learn causal relationships. In general BN topology identification is an ill posed problem and thus we believe using these metrics to compare different methods without additional downstream context is irrelevant. However, we conclude from \figref{fig:8_MIX_L1_DAG} and \figref{fig:protein_net} that the scores computed with graphical NFs can be used to learn relevant BN structures.

\subsection{Density estimation benchmark} \label{sec:exp-tabular-density}
\begin{table}

    \caption{Average log-likelihood on test data over 3 runs, under-scripted error bars are equal to the standard deviation.
    Results are reported in nats; higher is better. 
    The best performing architecture per category for each dataset is written in bold. 
    (a) 1-step affine normalizers (b) 1-step monotonic normalizers. \emph{Graphical normalizing flows outperform coupling and autoregressive architectures on most of the datasets.}} \label{tab:tabular_density}
    \centering
    \scriptsize
    \setlength{\tabcolsep}{0pt}
    \renewcommand{\arraystretch}{1.5}
    
    \begin{tabular}{l l  c c c c c}
        \hline\hline
        &\scalebox{.8}{Dataset} & \scalebox{.6}{\tbf{POWER}} & \scalebox{.6}{\tbf{GAS}} & \scalebox{.6}{\tbf{HEPMASS}} & \scalebox{.6}{\tbf{MINIBOONE}} & \scalebox{.6}{\tbf{BSDS300}} \\
        \hline
        
        \multirow{3}{*}{\scalebox{.8}{(a)}}
        & \scalebox{.8}{Coup. }
            & \result{-5.595652333333334}{0.002610780002646944}
            & \result{-3.0495433333333337}{0.007895648435829896}
            & \result{-25.741226333333334}{0.006650417346971497}
            & \result{-38.34233366666667}{0.024608616164984692}
            & \result{57.334731}{0.0024840547497991497}\\ 
        & \scalebox{.8}{Auto.}
            & \result{-3.552735333333333}{0.001566747019215928}
            & \result{-0.34099799999999997}{0.008537987702029085}
            & \result{-21.661752333333336}{0.007925831074544806}
            & \bestresult{-16.6966}{0.05012842471758638}
            & \bestresult{63.73638033333333}{0.004330463049399384} \\ 
        & \scalebox{.8}{Graph. }
            & \bestresult{-2.7979636666666665}{0.00799790907397394}
            & \bestresult{1.985552666666667}{0.020352012665963974}
            & \bestresult{-21.18043833333333}{0.07373030467106993}
            & \result{-19.672145}{0.05610214543134648}
            & \result{62.85277466666667}{0.06672677328396033}\\ \hline
        \multirow{3}{*}{\scalebox{.8}{(b)}}
        & \scalebox{.8}{Coup. }
            & \result{0.246356}{0.0012096316243661454}
            & \result{5.121069666666667}{0.029769947064484863}
            & \result{-20.545104333333335}{0.04462281262115015}
            & \result{-32.04131233333334}{0.12106401195327407}
            & \result{107.17215866666668}{0.4562053354454408} \\
        & \scalebox{.8}{Auto.}
            & \result{0.5769643333333333}{0.0017583917525840062}
            & \result{9.786357666666667}{0.042050408403353966}
            & \result{-14.515809666666668}{0.15825141180061844}
            & \bestresult{-11.65676}{0.017674348323677303}
            & \result{151.28819533333333}{0.3101846263988979}  \\
        & \scalebox{.8}{Graph. }
            & \bestresult{0.620236}{0.043806243550434706}
            & \bestresult{10.146548666666666}{0.1515902526183299}
            & \bestresult{-14.165973333333334}{0.13134025332784377}
            & \result{-16.227559}{0.5166131106860017}
            & \bestresult{155.22247133333335}{0.10754154750090603} \\
       \hline \hline
    \end{tabular}
\end{table}

In these experiments, we compare autoregressive, coupling and graphical conditioners with no $\ell_1$-penalization on benchmark tabular datasets as introduced by \cite{MAF} for density estimation. See \tabref{tab:ds_desc} for a description. We evaluate each conditioner in combination with monotonic and affine normalizers.
We only compare NFs with a single transformation step because our focus is on the conditioner capacity. We observed during preliminary experiments that stacking multiple conditioners  improves the performance slightly, however the gain is marginal compared to the loss of interpretability. To provide a fair comparison we have fixed in advance the neural architectures used to parameterize the normalizers and conditioners as well as the training parameters by taking inspiration from those used by \citet{UMNN} and \citet{MAF}. The variable ordering of each dataset is randomly permuted at each run. All hyper-parameters are provided in Appendix \ref{app:archi-tabular} and a public implementation will be released on Github. 

First, \tabref{tab:tabular_density} presents the test log-likelihood obtained by each architecture. These results indicate that graphical conditioners offer the best performance in general. Unsurprisingly, coupling layers show the worst performance, due to the arbitrarily assumed independencies. Autoregressive and graphical conditioners show very similar performance for monotonic normalizers, the latter being slightly better on 4 out of the 5 datasets. \tabref{tab:tabular_density_all} contextualizes the performance of graphical normalizing flows with respect to the most popular normalizing flow architectures.
Comparing the results together, we see that while additional steps lead to noticeable improvements for affine normalizers (MAF), benefits are questionable for monotonic transformations. Overall, graphical normalizing flows made of a single transformation step are competitive with the best flow architectures with the added value that they can directly be translated into their equivalent BNs. \emph{From these results we stress out that single step graphical NFs are able to model complex densities on par with SOTA while they offer new ways of introducing domain knowledge.}

Second, \tabref{tab:edges} presents the number of edges in the BN associated with each flow. For \textsc{power} and \textsc{gas}, the number of edges found by the graphical conditioners is close or equal to the maximum number of edges. Interestingly, graphical conditioners outperform  autoregressive conditioners on these two tasks, demonstrating the value of finding an appropriate ordering particularly when using affine normalizers. Moreover, graphical conditioners correspond to BNs whose sparsity is largely greater than for autoregressive conditioners while providing equivalent if not better performance. The depth \citep{depth-dag} of the equivalent BN directly limits the number of steps required to inverse the flow. Thus sparser graphs that are inevitably shallower correspond to NFs for which sampling, i.e. computing their inverse, is faster.
\begin{table}
    \caption{Average log-likelihood on test data over 3 runs, under-scripted error bars are equal to the standard deviation. Results are reported in nats, higher is better. The results followed by a star are copied from the literature and the number of steps in the flow is indicated in parenthesis for each architecture. \textit{Graphical normalizing flows reach density estimation performance on par with the most popular flow architectures whereas it is only made of 1 transformation step.}} \label{tab:tabular_density_all}
    \centering
    \scriptsize
    \setlength{\tabcolsep}{0pt}
    \renewcommand{\arraystretch}{1.5}
    
    \begin{tabular}{l  c c c c c}
        \hline\hline
        \scalebox{.8}{Dataset} & \scalebox{.6}{\tbf{POWER}} & \scalebox{.6}{\tbf{GAS}} & \scalebox{.6}{\tbf{HEPMASS}} & \scalebox{.6}{\tbf{MINIBOONE}} & \scalebox{.6}{\tbf{BSDS300}} \\
        \hline
        \scalebox{.7}{Graph.-UMNN (1)}
            & \result{0.620236}{0.043806243550434706}
            & \result{10.146548666666666}{0.1515902526183299}
            & \result{-14.165973333333334}{0.13134025332784377}
            & \result{-16.227559}{0.5166131106860017}
            & \result{155.22247133333335}{0.10754154750090603} \\
            \hline
        \scalebox{.7}{MAF (5)}
            & \result{0.14}{0.005} 
            & \result{9.07}{0.01}
            & \result{-17.7}{0.01}
            & \result{-11.75}{0.22}
            & \result{155.69}{0.14} \\ 
        \scalebox{.7}{Glow$^\star$ (10)}
            & \result{0.42}{0.01} 
            & \result{12.24}{0.03}
            & \result{-16.99}{0.02}
            & \result{-10.55}{0.45}
            & \result{156.95}{0.28} \\ 
        \scalebox{.7}{UMNN-MAF$^\star$ (5)}
            & \result{0.63}{0.01}
            & \result{10.89}{0.7}
            & \result{-13.99}{.21}
            & \result{-9.67}{.13}
            & \result{157.98}{.01}
            \\         
        \scalebox{.7}{Q-NSF$^\star$ (10)}
            & \result{0.66}{0.005}
            & \result{12.91}{0.01}
            & \result{-14.67}{.02}
            & \result{-9.72}{.24}
            & \result{157.42}{.14}
            \\ 
        \scalebox{.7}{FFJORD$^\star$ (5-5-10-1-2)}
            & \result{0.46}{0.01} 
            & \result{8.59}{0.12} 
            & \result{-14.92}{0.08} 
            & \result{-10.43}{0.04} 
            & \result{157.40}{0.19}
            \\ \hline \hline
    \end{tabular}
\end{table}

\begin{table}
    \caption{Rounded average number of edges (over 3 runs) in the equivalent Bayesian network. \textit{The graphical conditioners lead to sparser BNs compared to autoregressive conditioners.}} \label{tab:edges}
    \centering
    \scriptsize
    \setlength{\tabcolsep}{2pt}
    \renewcommand{\arraystretch}{1.5}
    
    \begin{tabular}{l c c c c c}
        \hline\hline
        Dataset & \tbf{P} & \tbf{G} & \tbf{H} & \tbf{M} & \tbf{B} \\
        \hline
        Graph.-Aff. & $15$   & $26$  & $152$  & $277$  & $471$  \\
        Graph.-Mon. & $15$    & $27$  & $159$  & $265$  & $1594$  \\
        \hline
        Coupling & $9$  & $16$  & $110$  & $462$ & $992$ \\ 
        Autoreg. & $15$  & $28$  & $210$  & $903$ & $1953$\\ 
         \hline\hline
        
    \end{tabular}
\end{table}

    
        
    

\section{Discussion}
\paragraph{Cost of learning the graph structure}
The attentive reader will notice that learning the topology does not come for free. Indeed, the Lagrangian formulation requires solving a sequence of optimization problems which increases the number of epochs before convergence. In our experiments we observed different overheads depending on the problems, however in general the training time is at least doubled. This does not impede the practical interest of using graphical normalizing flows. The computation overhead is more striking for non-affine normalizers (e.g. UMNNs) that are computationally heavy. However, we observed that most of the time the topology recovered by affine graphical NFs is relevant. It can thus be used as a prescribed topology for normalizers that are heavier to run, hence alleviating the computation overhead. Moreover, one can always hypothesize on the graph topology but more importantly the graph learned is usually sparser than an autoregressive one while achieving similar if not better results. The sparsity is interesting for two reasons: it can be exploited for speeding up the forward density evaluation; but more importantly it usually corresponds to shallower BNs that can be inverted faster than autoregressive structures. 
\paragraph{Bayesian network topology learning}
Formal BN topology learning has extensively been studied for more than 30 years now and many strong theoretical results on the computational complexity have been obtained. Most of these results however focus on discrete random variables, and how they generalize in the continuous case is yet to be explained. The topic of BN topology learning for discrete variables has been proven to be NP-hard by \citet{chickering-NP-hard}. However, while some greedy algorithms exist, they do not lead in general to a minimal I-map although allowing for an efficient factorization of random discrete vectors distributions in most of the cases. These algorithms are usually separated between the constrained-based family such as the \textit{PC algorithm}~\citep{pc-algorithm} or the \textit{incremental association Markov blanket}~\citep{PGM-book} and the score-based family as used in the present work.
Finding the best BN topology for continuous variables has not been proven to be NP-hard however the results for discrete variables suggest that without strong assumptions on the function class the problem is hard. 

The recent progress made in the continuous setting relies on the heuristic used in score-based methods. In particular, \citet{DAG-1} showed that the acyclicity constraint required in BNs can be expressed with NO TEARS, as a continuous function of the adjacency matrix, allowing the Lagrangian formulation to be used.
\citet{DAG-2} proposed DAG-GNN, a follow up work of \citet{DAG-1} which relies on variational inference and auto-encoders to generalize the method to non-linear structural equation models. Further investigation of continuous DAG learning in the context of causal models was carried out by \citet{DAG-3}. They use the adjacency matrix of the causal network as a mask over neural networks to design a score which is the log-likelihood of a parameterized normal distribution. The requirement to pre-define a parametric distribution before learning restricts the factors to simple conditional distributions. In contrast, our method combines the constraints given by the BN topology with  NFs which are free-form universal density estimators. Remarkably, their method leads to an efficient one-pass computation of the joint density. This neural masking scheme can also be implemented for NF architectures such as already demonstrated by \citet{MAF} and \citet{BNAF} for autoregressive conditioners. 

\paragraph{Shuffling between transformation steps}
As already mentioned, consecutive transformation steps are often combined with randomly fixed permutations in order to mitigate the ordering problem. Linear flow steps \citep{TAN} and 1x1 invertible convolutions \citep{GLOW} generalize these fixed permutations. They are parameterized by a matrix $W = PLU$ where $P$ is the fixed permutation matrix, and $L$ and $U$ are respectively a lower and an upper triangular matrix. Although linear flow improves the simple permutation scheme, they do still rely on an arbitrary permutation. To the best of our knowledge, graphical conditioners are the first attempt to get completely rid of any fixed permutation in NFs.


\paragraph{Inductive bias}
Graphical conditioners eventually lead to binary masks that model the conditioning components of a factored joint distribution. In this way, the conditioners process their input as they would process the full vector.
We show experimentally in Appendix \ref{app:MNIST} that this effectively leads to good inductive bias for processing images with NFs. 
In addition, we have shown that normalizing flows built from graphical conditioners combined with monotonic transformations are expressive density estimators. 
In effect, this means that enforcing some a priori known independencies can be performed thanks to graphical normalizing flows without hurting their modeling capacity. We believe such models could be of high practical interest because they cope well with large datasets and complex distributions while preserving some readability through their equivalent BN. 

Close to our work, \citet{wood} improve amortized inference by prescribing a BN structure between the latent and observed variables into a FFJORD NF, once again showing the interest of using the potential BN knowledge. Similar to our work, \citet{khemakhem2020causal} see causal autoregressive flows as structural equation modelling. They show bivariate autoregressive affine flows can be used to identify the causal direction under mild conditions. Under similar mild conditions, discovering causal relationships with graphical normalizing flows could well be an exciting research direction.




\paragraph{Conclusion}
We have revisited coupling and autoregressive conditioners for normalizing flows as Bayesian networks. 
From this new perspective, we proposed the more general graphical conditioner architecture for normalizing flows. We have illustrated the importance of assuming or learning a relevant Bayesian network topology for density estimation. In addition, we have shown that this new architecture compares favorably with autoregressive and coupling conditioners and on par to the most common flow architectures on standard density estimation tasks even without any hypothesized topology. One interesting and straightforward extension of our work would be to combine it with normalizing flows for discrete variables.
We also believe that graphical conditioners could be used when the equivalent Bayesian network is required for downstream tasks such as in causal reasoning.
\subsubsection*{Acknowledgments}
We thank Vân Anh Huynh-Thu, Johann Brehmer, and Louis Wehenkel for proofreading this manuscript or an earlier version of it. We are also thankful to the reviewers for helpful comments. Antoine Wehenkel is a research fellow of the F.R.S.-FNRS (Belgium) and acknowledges its financial support. Gilles Louppe is recipient of the ULiège - NRB Chair on Big data and is
thankful for the support of NRB.
\bibliography{biblio}
\clearpage
\appendix
\onecolumn
\appendix
\section{Optimization procedure}\label{app:optim}
\begin{algorithm}
\begin{algorithmic} \caption{Main Loop}
\State $\text{epoch} \gets 0$
\While {$!\text{Stopping criterion}$} 
    \ForAll{batch $\mb{X} \in \mb{X_{train}}$}
        \State $\text{loss} \gets \Call{computeLoss}{\text{flow}, X}$
        \State $\Call{optimize}{\text{flow}, \text{loss}}$
    \EndFor
   \State $\text{loss}_{valid} \gets \Call{computeLoss}{\text{flow}, \mb{X_{test}}}$
    \State $\text{epoch} \gets \text{epoch} + 1$
    \State $\Call{updateCoefficients}{\text{flow}, \text{epoch}, \text{loss}_{valid}}$
    \If{$\Call{isDagConstraintNull}{\text{flow}}$}
        \State $\Call{PostProcess}{\text{flow}}$
    \EndIf
\EndWhile
\end{algorithmic}
\end{algorithm}
The method $\Call{computeLoss}{\text{flow}, X}$ is computed as described by equation~\eqref{eq:loss}. The $\Call{optimize}{\text{flow}, \text{loss}}$ method performs a backward pass and an optimization step with the chosen optimizer (Adam in our experiments). The post-processing is peformed by $\Call{PostProcess}{\text{flow}}$ and consists in thresholding the values in $A$ such that the values below a certain threshold are set to $0$ and the other values to $1$, after post-processing the stochastic door is deactivated. The threshold is the smallest real value that makes the equivalent graph acyclic. The method $\Call{updateCoefficients}$ updates the Lagrangian coefficients as described in section \ref{sec:optim}.

\section{Jacobian of graphical conditioners}\label{app:proof-jac-gnf}
\begin{proposition}\label{app:prop:jac-gnf}
The absolute value of the determinant of the Jacobian of a normalizing flow step based on graphical conditioners is equal to the product of its diagonal terms. 
\end{proposition}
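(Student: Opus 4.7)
The plan is to exploit the DAG structure of the adjacency matrix $A$ to argue that, up to a simultaneous row/column permutation, the Jacobian $J_{\mathbf{g}}$ is triangular.

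First I would analyze which entries of $J_{\mathbf{g}}$ can be nonzero. From the definition in \eqref{eq:gnf}, the $i$-th output is $g^i\bigl(x_i;\mathbf{h}^i(\mathbf{x}\odot A_{i,:})\bigr)$, so the partial derivative $\partial g^i/\partial x_j$ can be nonzero only if $j=i$ or $A_{i,j}=1$, i.e., only if $j\in\{i\}\cup\mathcal{P}_i$. All other entries of $J_{\mathbf{g}}$ vanish identically. This gives a sparsity pattern dictated entirely by the graph $\mathcal{G}(A)$.

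Next, since $\mathcal{G}(A)$ is a DAG by assumption, there exists a topological (ancestral) ordering, i.e., a permutation $\pi$ of $\{1,\dots,d\}$ such that $j\in\mathcal{P}_i\Rightarrow \pi(j)<\pi(i)$. Let $P$ denote the permutation matrix associated with $\pi$. I would then consider the matrix $\tilde J := P J_{\mathbf{g}} P^T$: its entry in row $\pi(i)$, column $\pi(j)$ equals $(J_{\mathbf{g}})_{i,j}$. By the previous step, $(J_{\mathbf{g}})_{i,j}\neq 0$ forces $j=i$ (a diagonal entry of $\tilde J$) or $\pi(j)<\pi(i)$ (a strictly lower-triangular entry of $\tilde J$). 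Hence $\tilde J$ is lower triangular, and its diagonal entries are exactly the diagonal entries of $J_{\mathbf{g}}$, merely reordered.

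To conclude, I would use $\det(P)\det(P^T)=(\pm1)^2=1$ to obtain $\det(\tilde J)=\det(J_{\mathbf{g}})$, and then invoke the standard fact that the determinant of a triangular matrix equals the product of its diagonal entries. Taking absolute values gives
\begin{equation*}
\bigl|\det J_{\mathbf{g}}\bigr| \;=\; \bigl|\det \tilde J\bigr| \;=\; \prod_{i=1}^{d}\bigl|\tilde J_{ii}\bigr| \;=\; \prod_{i=1}^{d}\left|\frac{\partial g^i}{\partial x_i}\right|,
\end{equation*}
which is the claim. There is no real obstacle here; the only subtle point is noticing that one must permute both rows \emph{and} columns by the same $\pi$ so that the sign contributions of $P$ and $P^T$ cancel and the diagonal is preserved as a set. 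Everything else reduces to the definition of the graphical conditioner and the existence of a topological ordering for any DAG.
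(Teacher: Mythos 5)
Your proof is correct and follows essentially the same route as the paper's: both use the existence of a topological ordering of the DAG to conjugate the Jacobian by a permutation matrix, observe that the result is lower triangular with the same diagonal entries, and cancel the permutation determinants. Your version is slightly more explicit about the sparsity pattern of $J_{\mb{g}}$ (nonzero entries only at $(i,j)$ with $j\in\{i\}\cup\mathcal{P}_i$) and about why the simultaneous row/column permutation preserves the diagonal, but the underlying argument is identical.
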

\begin{proof} \textbf{Proposition \ref{app:prop:jac-gnf}}
A Bayesian Network is a directed acyclic graph. \cite{sedgewick2011algorithms} showed that every directed acyclic graph has a topological ordering, it is to say an ordering of the vertices such that the starting endpoint of every edge occurs earlier in the ordering than the ending endpoint of the edge. Let us suppose that an oracle gives us the permutation matrix $P$ that orders the components of $\mb{g}$ in the topological defined by $A$. Let us introduce the following new transformation $\mb{g}_P(\mb{x}_P) = P\mb{g}(P
^{-1}(P\mb{x}))$ on the permuted vector $\mb{x}_P = P\mb{x}$. The Jacobian of the transformation $\mb{g}_P$ (with respect to $\mb{x}_P$) is lower triangular with diagonal terms given by the derivative of the normalizers with respect to their input component. The determinant of such Jacobian is equal to the product of the diagonal terms. Finally, we have
\begin{align*}
    |\det(J_{\mb{g}_P(\mb{x}_P)})| &= |\det(P)||\det(J_{\mb{g}(\mb{x})})| \frac{|\det(P)|}{|\det(P)|}\\
    &= |\det(J_{\mb{g}(\mb{x})})|,
\end{align*}
because of (1) the chain rule; (2) The determinant of the product is equal to the product of the determinants; (3) The determinant of a permutation matrix is equal to $1$ or $-1$.
The absolute value of the determinant of the Jacobian of $\mb{g}$ is equal to the absolute value of the determinant of $\mb{g}_P$, the latter given by the product of its diagonal terms that are the same as the diagonal terms of $\mb{g}$. Thus the absolute value of the determinant of the Jacobian of a normalizing flow step based on graphical conditioners is equal to the product of its diagonal terms.
\end{proof}
\section{Experiments on topology learning}\label{app:top_dataset}
\subsection{Neural networks architecture}
We use the same neural network architectures for all the experiments on the topology. The conditioner functions $h_i$ are modeled by shared neural networks made of 3 layers of $100$ neurons. When using UMNNs for the normalizer we use an embedding size equal to $30$ and a 3 layers of $50$ neurons MLP for the integrand network.
\subsection{Dataset description}\label{app:top_dataset_descri}
\begin{figure*}
\caption{Ground truth adjacency matrices. Black squares denote direct connections and in light grey is their transposed.}
\centering
\begin{subfigure}[t]{.2\textwidth}
\includegraphics[width=1.\textwidth]{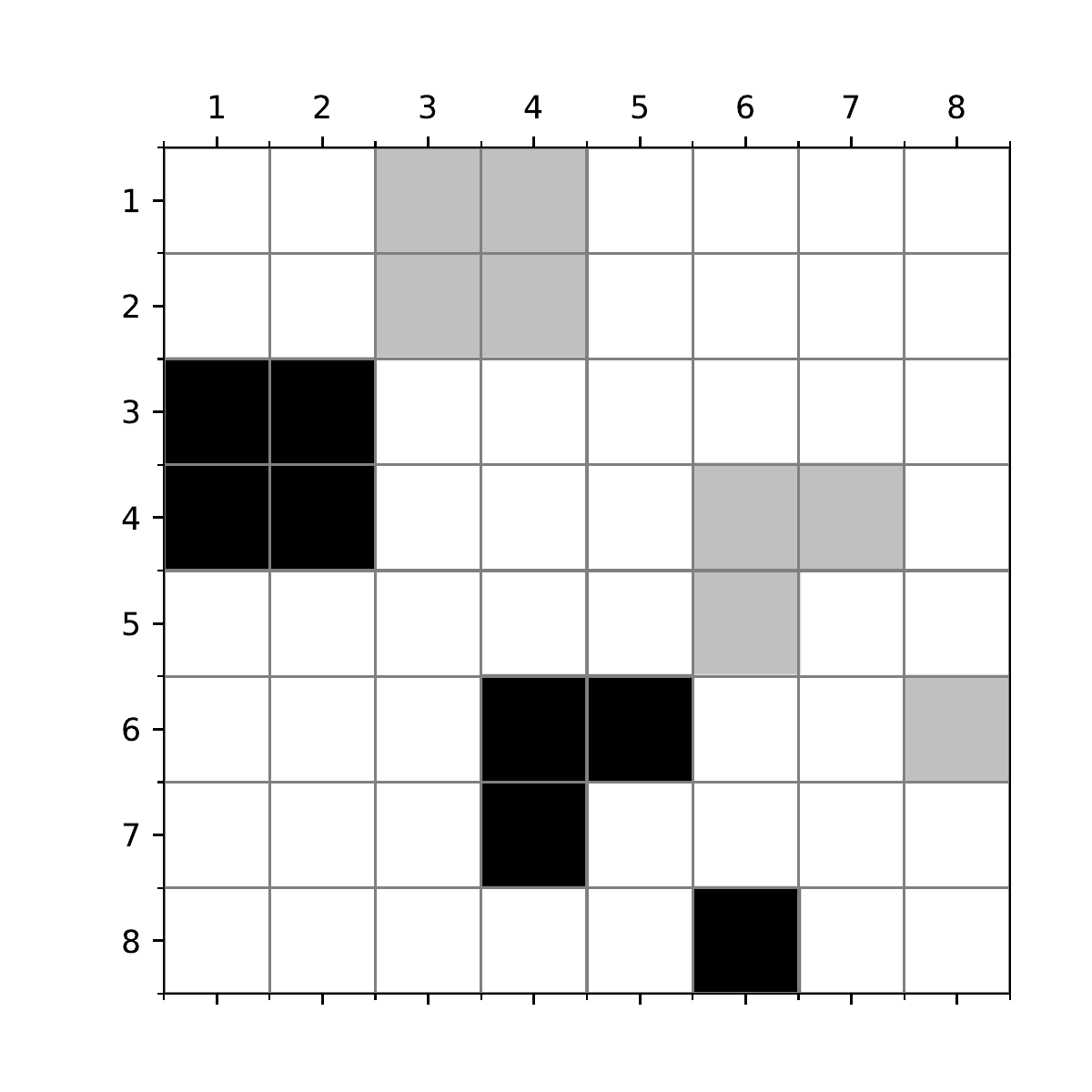}
\caption{Arithmetic Circuit}
\end{subfigure}
\begin{subfigure}[t]{.2\textwidth}
\includegraphics[width=1.\textwidth]{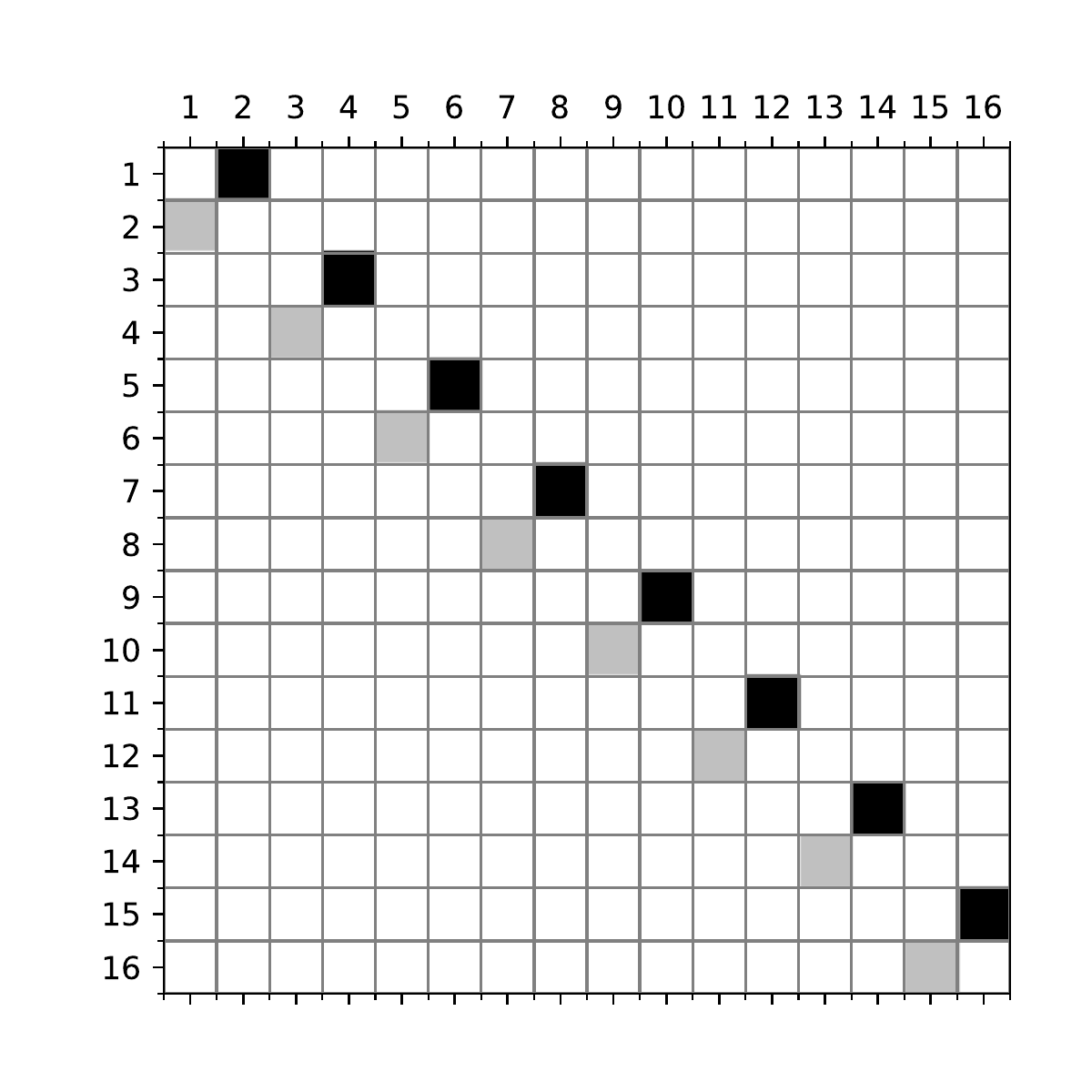}
\caption{8 Pairs}
\end{subfigure}
\begin{subfigure}[t]{.2\textwidth}
\includegraphics[width=1.\textwidth]{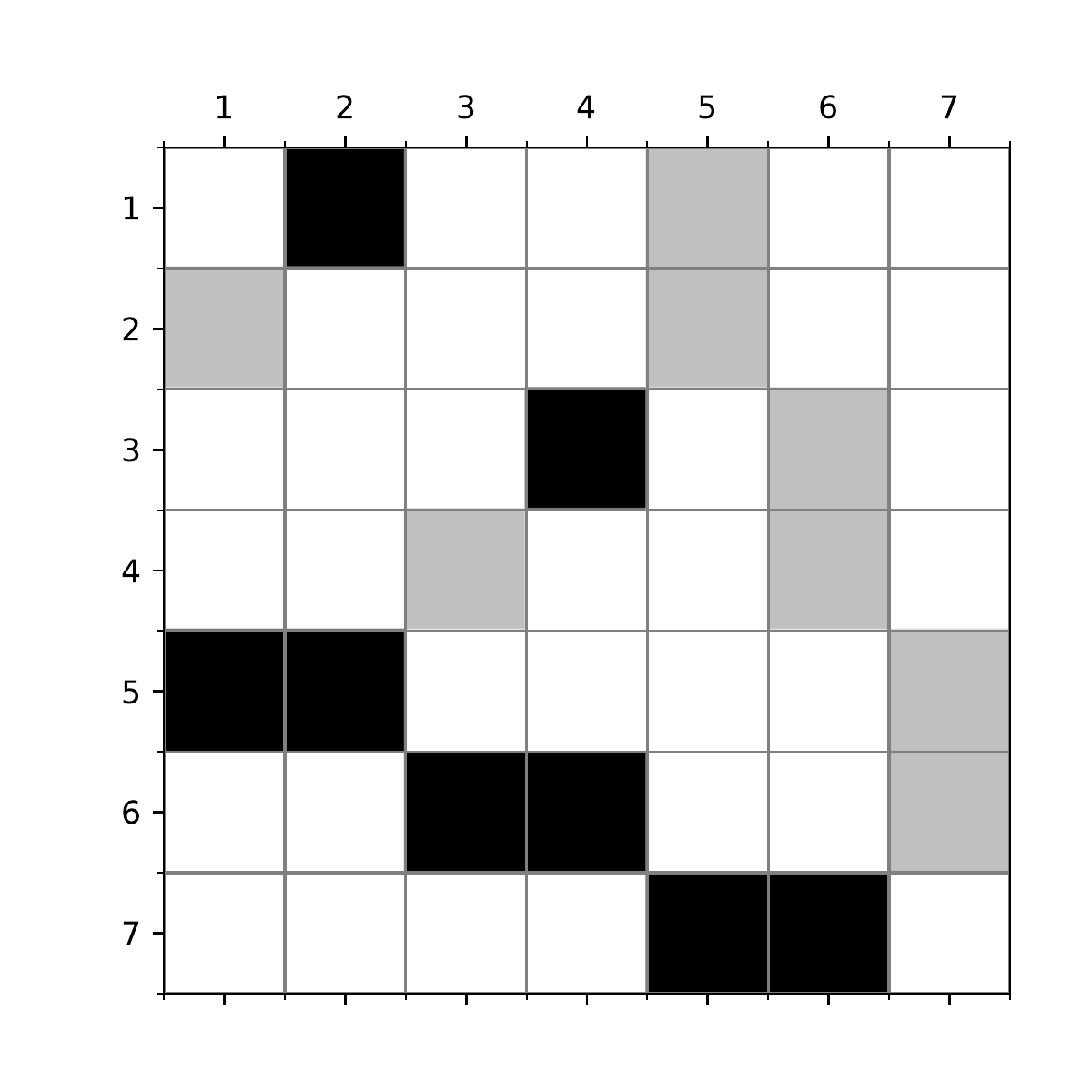}
\caption{Tree}
\end{subfigure}
\begin{subfigure}[t]{.2\textwidth}
\includegraphics[width=1.\textwidth]{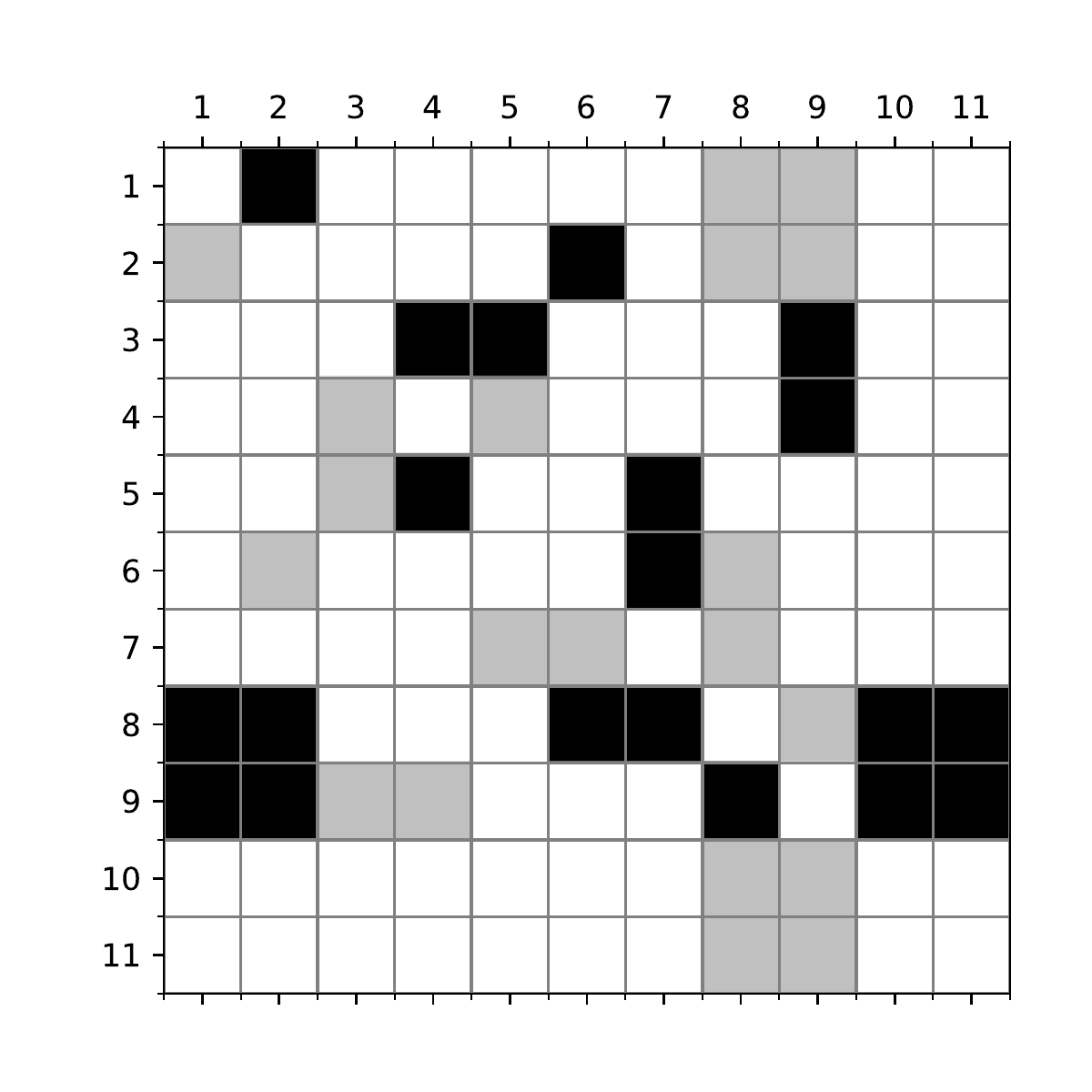}
\caption{Human Proteins}
\end{subfigure}
\end{figure*}
\paragraph{Arithmetic Circuit}
The arithmetic circuit reproduced the generative model described by \cite{wood}. It is composed of heavy tailed and conditional normal distributions, the dependencies are non-linear. We found that some of the relationships are rarely found by during topology learning, we guess that this is due to the non-linearity of the relationships which can quickly saturates and thus almost appears as constant.
\paragraph{8 pairs}
This is an artificial dataset made by us which is a concatenation of 8 2D toy problems borrowed from \cite{ffjord} implementation. These 2D variables are multi-modal and/or discontinuous. We found that learning the independence between the pairs of variables is most of the time successful even when using affine normalizers.
\paragraph{Tree}
This problem is also made on top of 2D toy problems proposed by \cite{ffjord}, in particular a sample $X = [X_1,\hdots, X_7]^T$ is generated as follows:
\begin{enumerate}
    \item The pairs variables $(X_1, X_2)$ and $(X_3, X_4)$ are respectively drawn from \textit{Circles} and \textit{8-Gaussians};
    \item $X_5 \sim \mathcal{N}(\max(X_1, X_2), 1)$;
    \item $X_6 \sim \mathcal{N}(\min(X_3, X_4), 1)$;
    \item $X_7 \sim 0.5 \mathcal{N}(\sin(X_5 + X_6), 1) + 0.5 \mathcal{N}(\cos(X_5 + X_6), 1)$.
\end{enumerate}
\paragraph{Human Proteins}
A causal protein-signaling networks derived from single-cell data. Experts have annoted 20 ground truth edges between the 11 nodes. The dataset is made of 7466 entries which we kept $5,000$ for training and $1,466$ for testing.

\subsection{Additional experiments}\label{app:top_dataset_exp}
\figref{fig:app_mono_L1} and \figref{fig:app_aff_L1} present the test log likelihood as a function of the $\ell_1$-penalization on the four datasets for monotonic and affine normalizers respectively. It can be observed that graphical conditioners perform better than autoregressive ones for certain values of regularization and when given a prescribed topology in many cases. It is interesting to observe that autoregressive architectures perform better than a prescribed topology when an affine normalizer is used. We believe this is due to the non-universality of mono-step affine normalizers which leads to different modeling trade-offs. In opposition, learning the topology improves the results in comparison to autoregressive architectures.
\begin{figure*}
\centering
\caption{Test log-likelihood as a function of $\ell_1$-penalization for monotonic normalizers. The red horizontal line is the average result when given a prescribed topology, the green horizontal line is the result with an autoregressive conditioner.}
    \label{fig:all_L1}
    \begin{subfigure}[t]{.32\textwidth}
    \includegraphics[width=1.\textwidth]{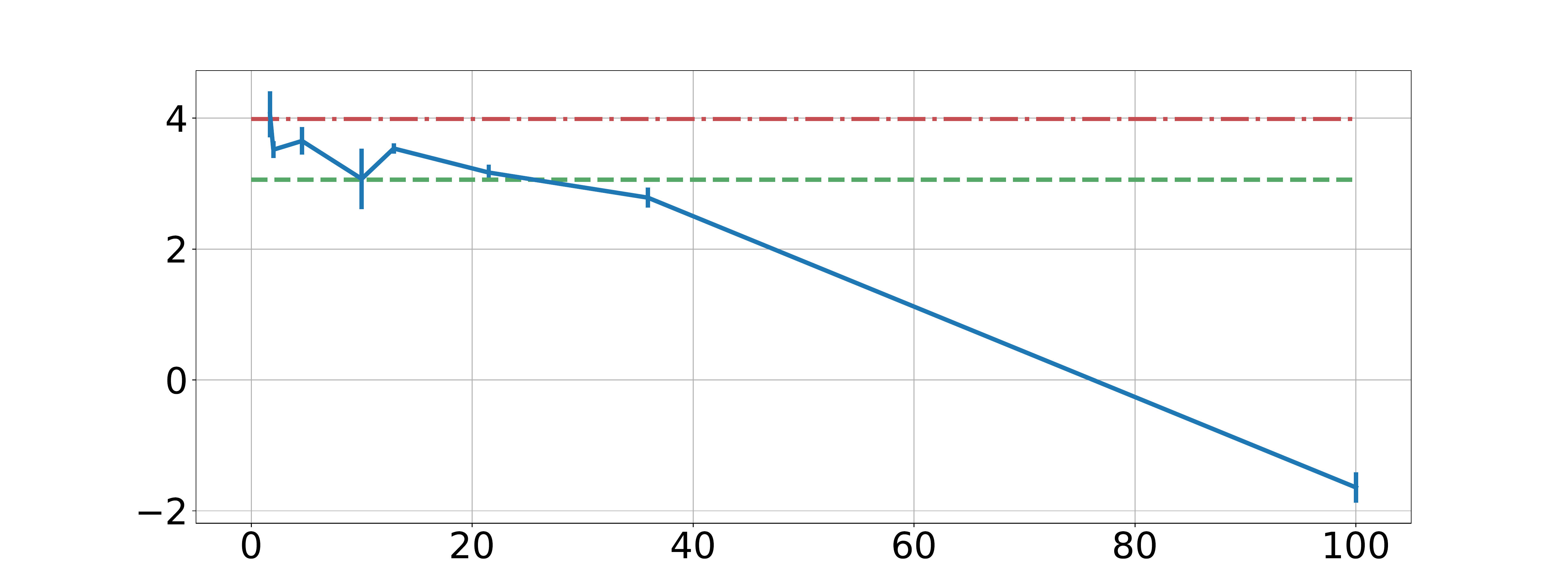}
    \caption{Arithmetic Circuit}
    \end{subfigure}
    \begin{subfigure}[t]{.32\textwidth}
    \includegraphics[width=1.\textwidth]{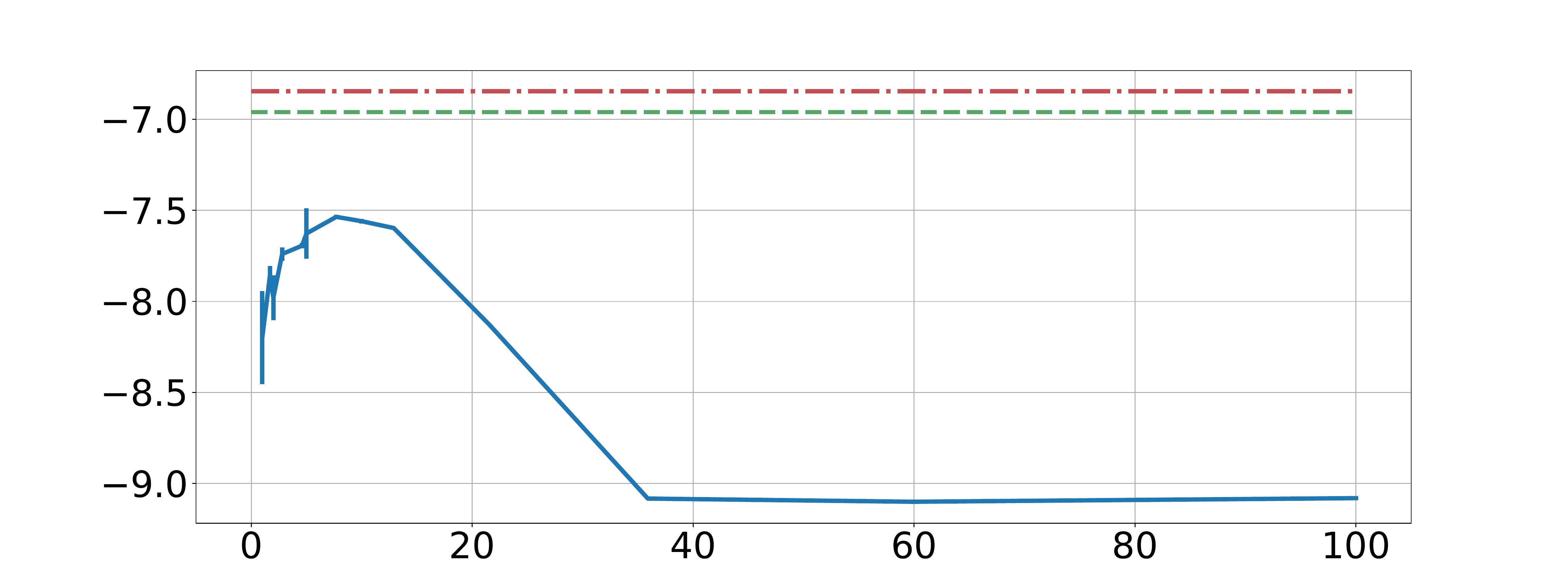}
    \caption{Tree}
    \end{subfigure}
    \begin{subfigure}[t]{.32\textwidth}
    \includegraphics[width=1.\textwidth]{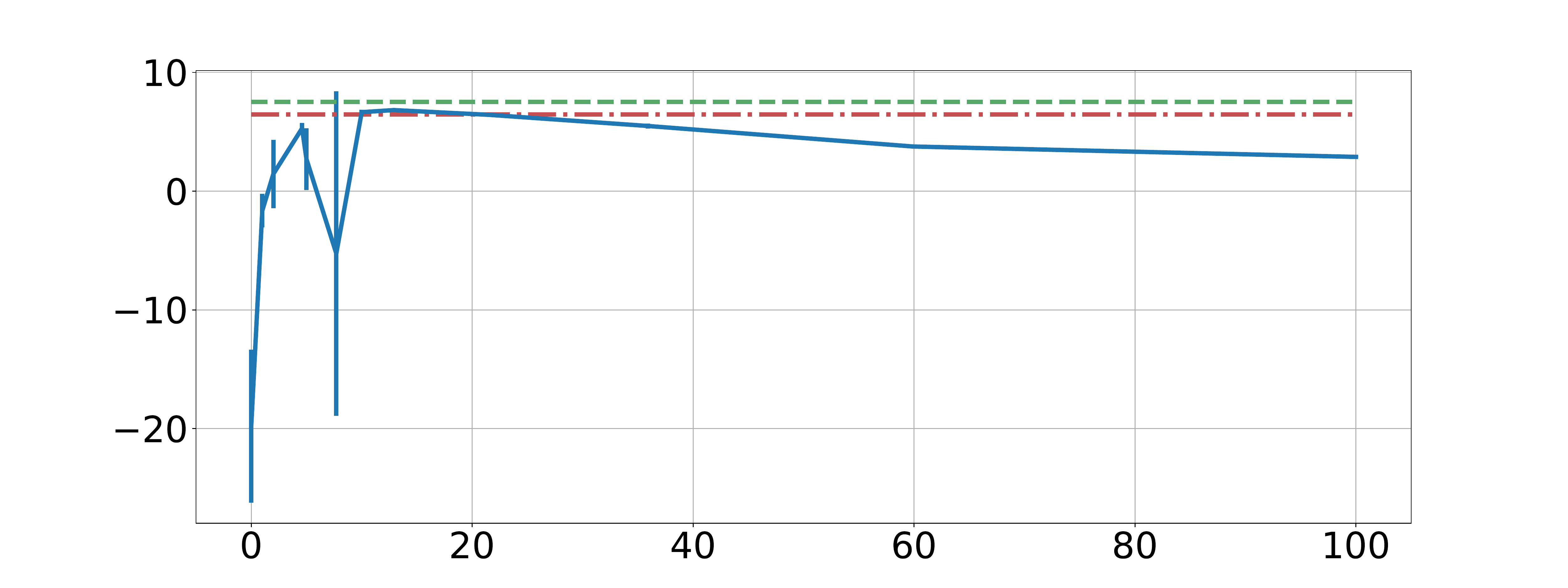}
    \caption{Human Proteins}
    \end{subfigure}
    \label{fig:app_mono_L1}
\end{figure*}
\begin{figure*}
\centering
\caption{Test log-likelihood as a function of $\ell_1$-penalization for affine normalizers. The red horizontal line is the average result when given a prescribed topology, the green horizontal line is the result with an autoregressive conditioner.}
    \label{fig:all_L1}
    \begin{subfigure}[t]{.24\textwidth}
    \includegraphics[width=1.\textwidth]{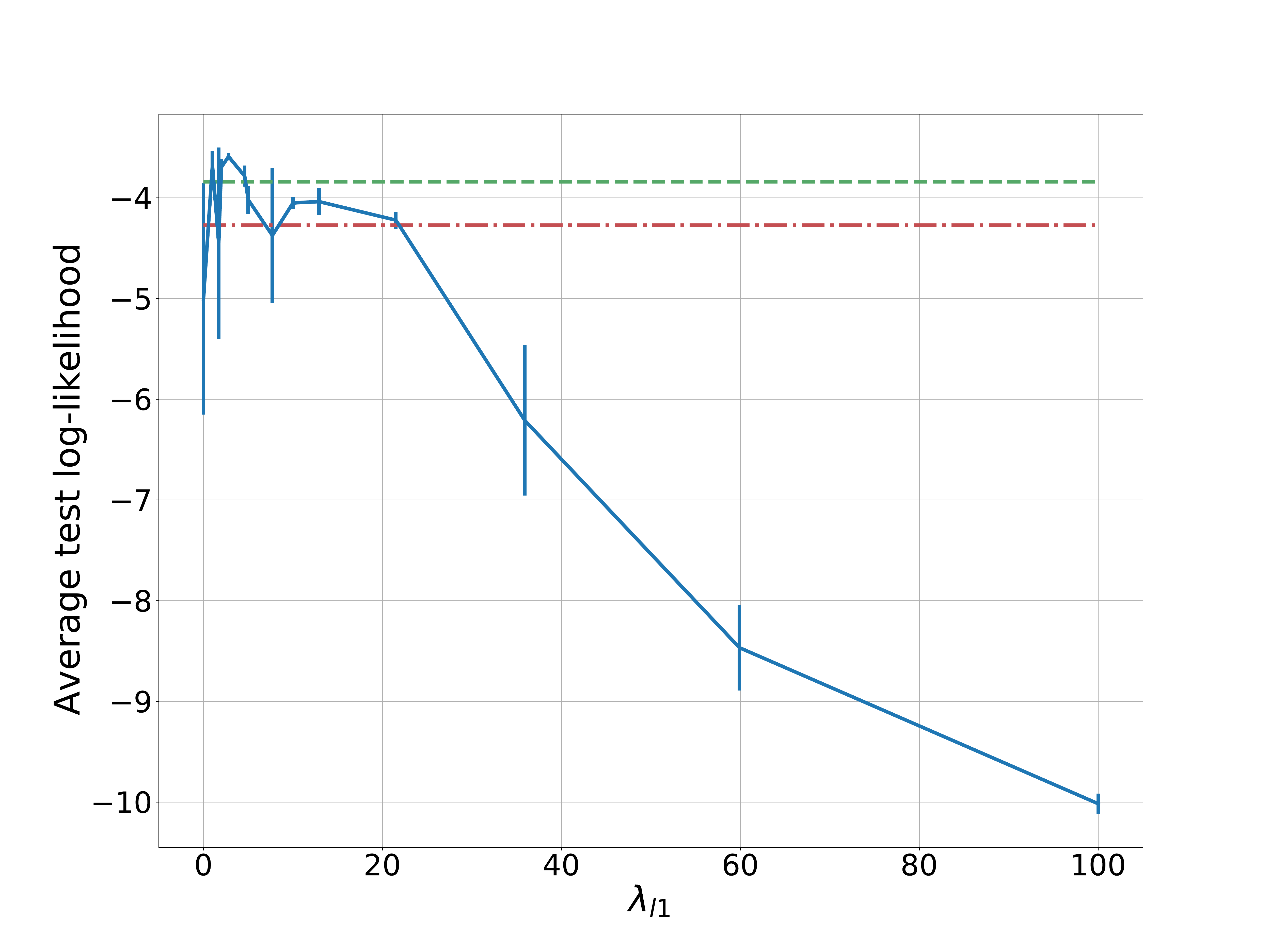}
    \caption{Arithmetic Circuit}
    \end{subfigure}
    \begin{subfigure}[t]{.24\textwidth}
    \includegraphics[width=1.\textwidth]{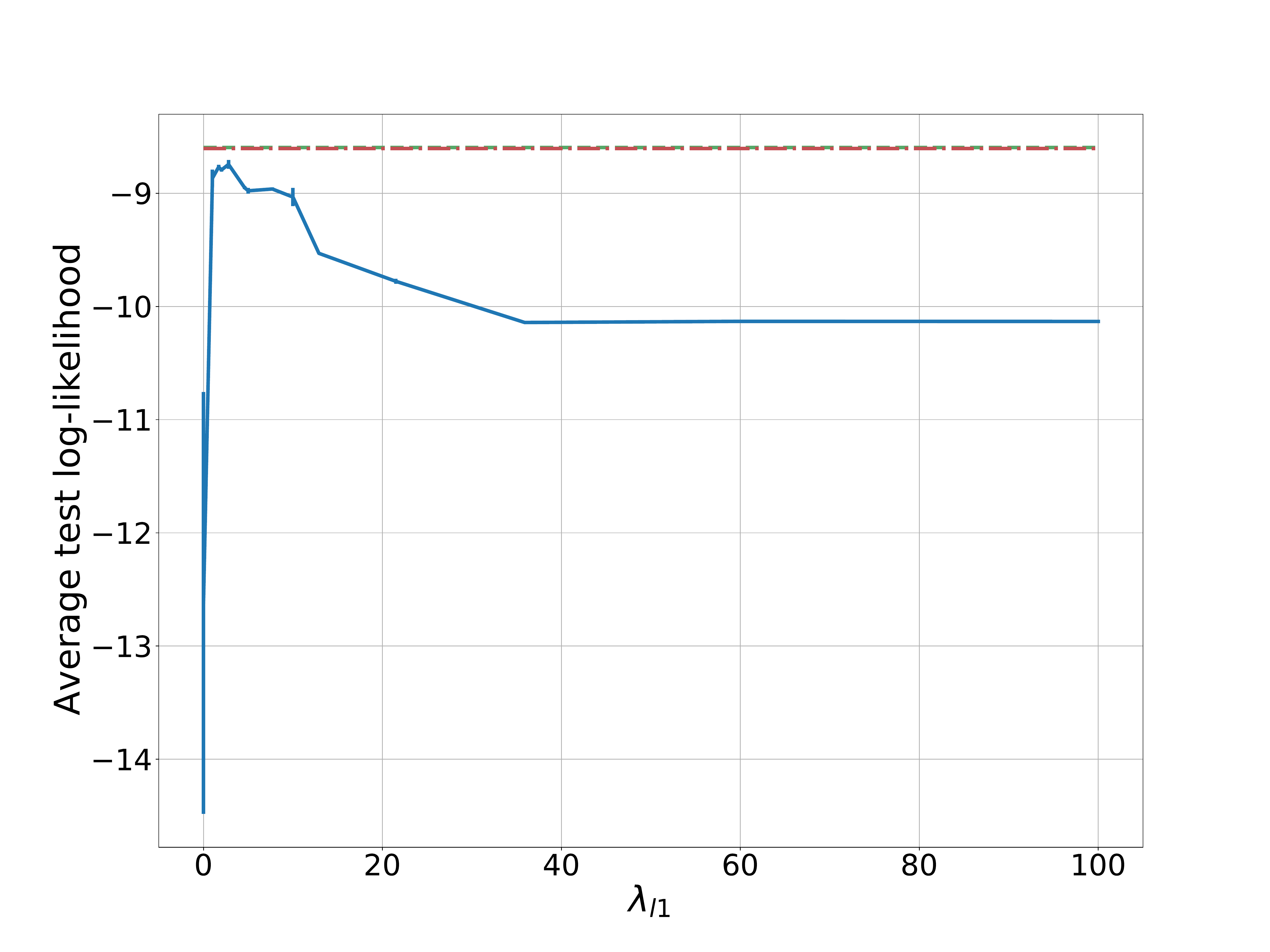}
    \caption{Tree}
    \end{subfigure}
    \begin{subfigure}[t]{.24\textwidth}
    \includegraphics[width=1.\textwidth]{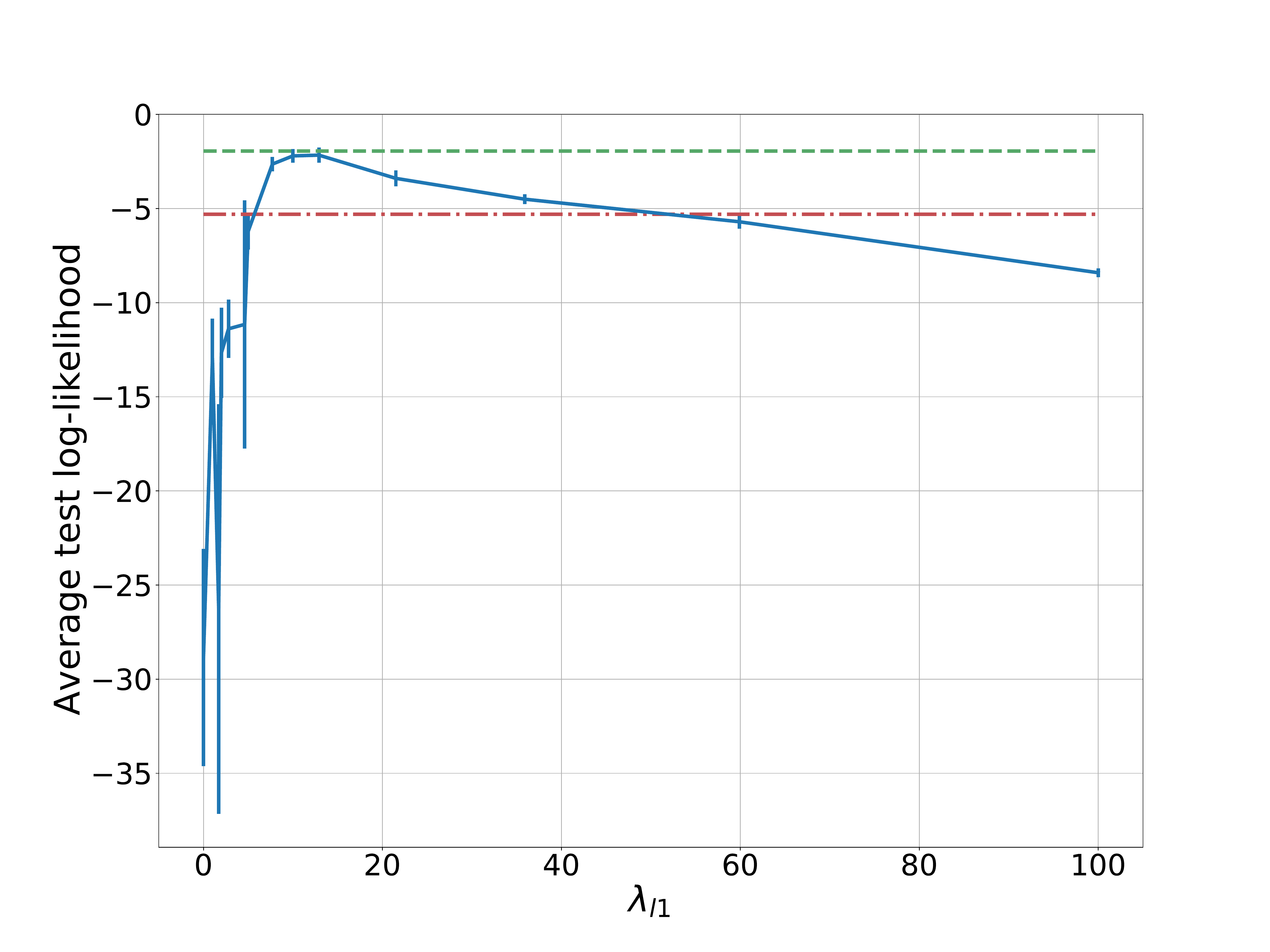}
    \caption{Human Proteins}
    \end{subfigure}
    \begin{subfigure}[t]{.24\textwidth}
    \includegraphics[width=1.\textwidth]{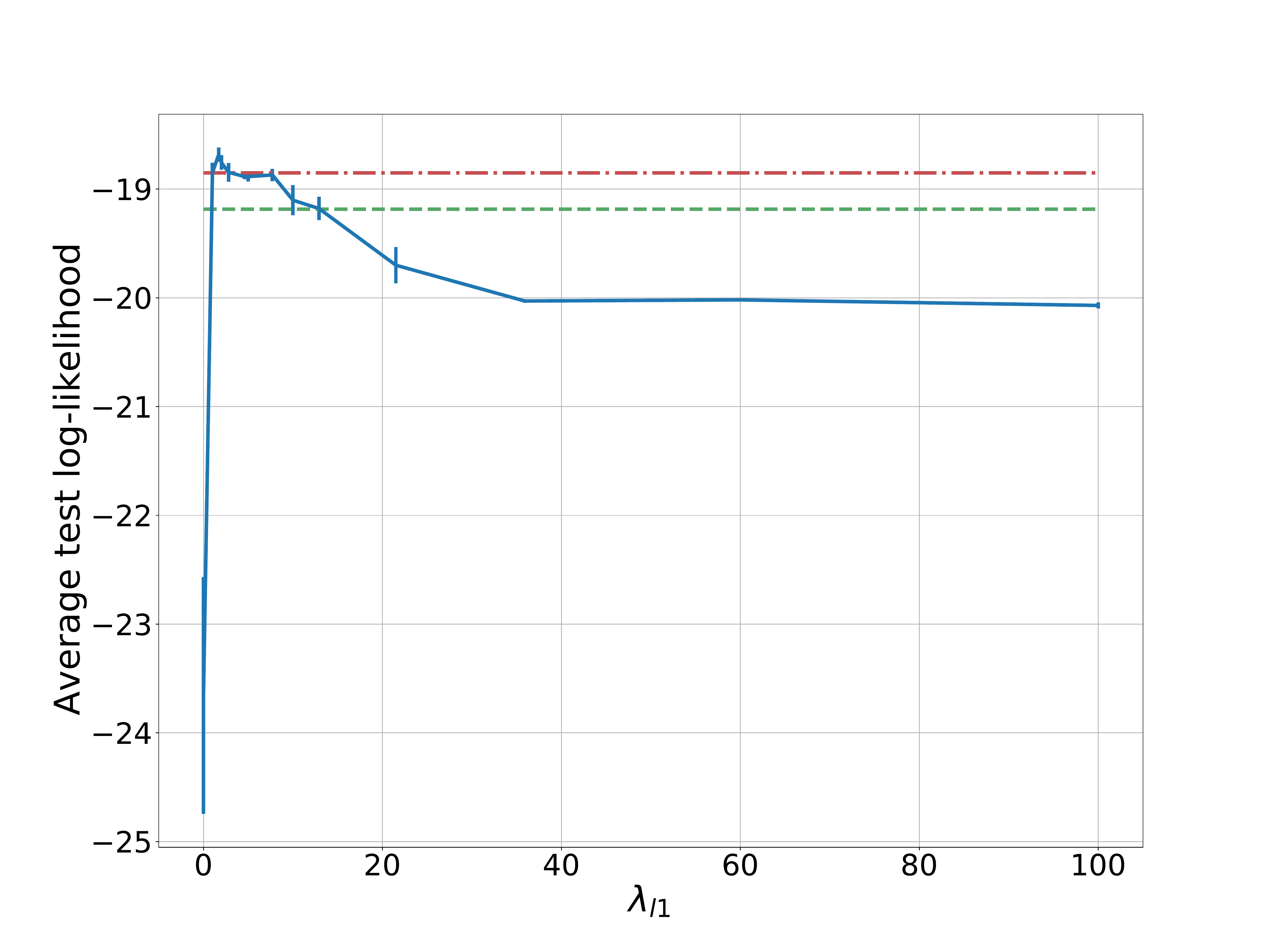}
    \caption{8 Pairs}
    \end{subfigure}
    \label{fig:app_aff_L1}
\end{figure*}
\section{Tabular density estimation - Training parameters}\label{app:archi-tabular}
\tabref{tab:train_configs} provides the hyper-parameters used to train the normalizing flows for the tabular density estimation tasks. In our experiments we parameterize the functions $\mb{h}^i$ with a unique neural network that takes a one hot encoded version of $i$ in addition to its expected input $\mb{x} \odot A_{i,:}$. The embedding net architecture corresponds to the network that computes an embedding of the conditioning variables for the coupling and DAG conditioners, for the autoregressive conditioner it corresponds to the architecture of the masked autoregressive network. The output of this network is equal to $2$ ($2 \times d$ for the autoregressive conditioner) when combined with an affine normalizer and to an hyper-parameter named \textit{embedding size} when combined with a UMNN. The number of dual steps corresponds to the number of epochs between two updates of the DAGness constraint (performed as in \cite{DAG-2}).
\begin{table}[H]
    \centering
    \tiny
    \setlength{\tabcolsep}{1pt}
    \renewcommand{\arraystretch}{1.5}
    
    \begin{tabular}{l c c c c c c c}
        \hline
        \hline
        Dataset & \tbf{POWER} & \tbf{GAS} & \tbf{HEPMASS} & \tbf{MINIBOONE} & \tbf{BSDS300}\\
        \hline
        Batch size & $2500$ & $10000$ & $100$ & $100$ & $100$ \\
        Integ. Net & $3 \times 100$ & $3 \times 200$ & $3 \times 200$ & $3 \times 40$ & $3 \times 150$ \\
        Embedd. Net & $3 \times 60$ & $3 \times 80$ & $3 \times 210$ & $3 \times 430$ & $3 \times 630$\\
        Embed. Size & $30$ & $30$ & $30$ & $30$ & $30$\\
        Learning Rate & $0.001$& $0.001$& $0.001$& $0.001$& $0.001$\\
        Weight Decay & $10^{-5}$ &  $10^{-3}$ & $10^{-4}$ & $10^{-2}$ & $10^{-4}$\\
        $\lambda_{\ell_1}$ & $0$ & $0$ & $0$ & $0$ & $0$\\
        \hline\hline
    \end{tabular}
    \vspace{1em}
    \caption{Training configurations for density estimation tasks.}
    \label{tab:train_configs}
\end{table}
In addition, in all our experiments (tabular and MNIST) the integrand networks used to model the monotonic transformations have their parameters shared and receive an additional input that one hot encodes the index of the transformed variable. The models are trained until no improvement of the average log-likelihood on the validation set is observed for 10 consecutive epochs.

\section{Density estimation of images} \label{app:MNIST}
We now demonstrate how graphical conditioners can be used to fold in domain knowledge into NFs by performing density estimation on MNIST images. The design of the graphical conditioner is adapted to images by parameterizing the functions $\mb{h}^i$ with convolutional neural networks (CNNs) whose parameters are shared for all $i \in \{1, ..., d\}$ as illustrated in \figref{fig:dag-condi-archi}. Inputs to the network $\mb{h}^i$ are masked images specified by both the adjacency matrix $A$ and the entire input image $\mb{x}$. 
Using a CNN together with the graphical conditioner allows for an inductive bias suitably designed for processing images. 
We consider single step normalizing flows whose conditioners are either coupling, autoregressive or graphical-CNN as described above, each combined with either affine or monotonic normalizers. 
The graphical conditioners that we use include an additional inductive bias that enforces a sparsity constraint on $A$ and which prevents a pixel's parents to be too distant from their descendants in the images. Formally, given a pixel located at $(i, j)$, only the pixels $(i \pm l_1, j \pm l_2),  l_1, l_2\in \{1, ..., L\}$ are allowed to be its parents. In early experiments we also tried not constraining the parents and observed slower but successful training leading to a relevant structure. 

Results reported in \tabref{tab:MNIST} show that graphical conditioners lead to the best performing affine NFs even if they are made of a single step. 
This performance gain can probably be attributed to the combination of both learning a masking scheme and processing the result with a convolutional network.
These results also show that when the capacity of the normalizers is limited, finding a meaningful factorization is very effective to improve performance. The number of edges in the equivalent BN is about two orders of magnitude smaller than for coupling and autoregressive conditioners. This sparsity is beneficial for the inversion since the evaluation of the inverse of the flow requires a number of steps equal to the depth \citep{depth-dag} of the equivalent BN. 
Indeed, we find that while obtaining density models that are as expressive, the computation complexity to generate samples is approximately divided by $\frac{5\times 784}{100} \approx 40 $ in comparison to the autoregressive flows made of 5 steps and comprising many more parameters. 

These experiments show that, in addition to being a favorable tool for introducing inductive bias into NFs, graphical conditioners open the possibility to build BNs for large datasets, unlocking the BN machinery for modern datasets and computing infrastructures. 
\begin{figure}
    \def\layersep{2.5cm}
    \centering
    \begin{tikzpicture}[shorten >=1pt,->,draw=black!50, node distance=1.25cm, scale=0.50]
    \tikzstyle{every pin edge}=[<-,shorten <=1pt]
    \tikzstyle{neuron}=[circle,fill=black!25,minimum size=7pt,inner sep=0pt]
    \tikzstyle{input neuron}=[neuron, fill=black!50];
    \tikzstyle{output neuron}=[neuron, fill=black!50];
    \tikzstyle{hidden neuron}=[neuron, fill=black!50];
    \tikzstyle{annot} = [text width=4em, text centered]
    \tikzset{edge/.style = {->,-latex}}

    \node[inner sep=0pt] (graphical) at (0,0)
    {\includegraphics[width=.55\textwidth]{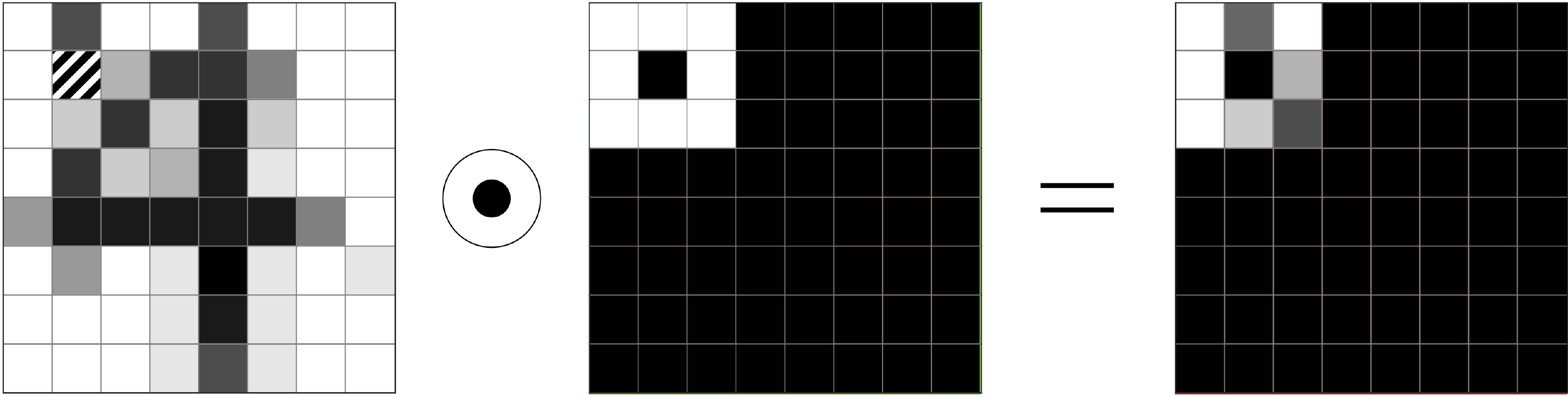}};
    \node[] (x) at (-5.5, -2.8) {$\mb{x}$};
    \node[] (A) at (0, -2.8) {$A_{i,:}$};
    \node[] (label) at (0, 3) {\small Masking operation from the adjacency matrix.};
    \node[fit={(graphical) (x) (A) (label)}] (graphical-all) {};
    \begin{scope}[shift={(12.5,0)}]
    \node[inner sep=0pt] (cnn) at (0,0){\includegraphics[width=.11\textwidth]{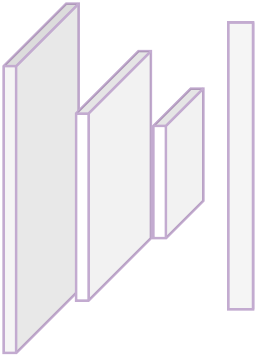}};
    \node[] (cnn-label) at (0, 3) {\small CNN};
    \node[fit={(cnn) (cnn-label)}] (cnn-all) {};
    \draw[edge, ultra thick, black]  (graphical-all) to (graphical-all-|cnn-all.west);

    
    
         \node[] at (1.8, -2.8) {$\mb{c}^i(\mb{x})$};
    
    
    \end{scope}
\end{tikzpicture}
    \caption{Illustration of how a graphical conditioner's output $\mb{c}^i(\mb{x})$ is computed for images. The sample $\mb{x}$, on the left, is an image of a $4$. The stripes denote the pixel $x_i$. The parents of $x_i$ in the learned DAG are shown as white pixels on the mask $A_{i, :}$, the other pixels are in black. The element-wise product between the image $\mb{x}$ and the mask $A_{i, :}$ is processed by a convolutional neural network that produces the embedding vector $\mb{c}^i(\mb{x})$ conditioning the pixel $x_i$.}
    \label{fig:dag-condi-archi}

\end{figure}
\begin{figure}
\begin{minipage}[h]{.48\textwidth}

\begin{table}[H]
    \centering
    \scriptsize
    \setlength{\tabcolsep}{2pt}
    \renewcommand{\arraystretch}{1.5}
    \begin{tabular}{l l | c c c c}
    \hline\hline
        & Model & Neg. LL. & Parameters & Edges & Depth \\
        \hline
        \multirow{3}{*}{(a)} 
        & G-Affine (1) & \result{1.8074786666666667}{0.006730915803629995} & $\scriptstyle 1\times 10^6$ & $\scriptstyle 5016$ & $\scriptstyle 103$ \\ 
        & G-Monotonic (1) & \result{1.1694063333333333}{0.027104726898130156} & $\scriptstyle 1\times 10^6$ & $\scriptstyle 2928$ & $\scriptstyle 125$ \\ 
        \hline
        \multirow{4}{*}{(b)} 
        & A-Affine (1) & \result{2.1196110000000004}{0.02185270459233812} & $\scriptstyle 3\times 10^6$ & $\scriptstyle 306936$ & $783$ \\ 
        & A-Monotonic (1) & \result{1.3669339999999999}{0.040867502713035936} & $\scriptstyle 3.1\times 10^6$ & $\scriptstyle 306936$ & $\scriptstyle 783$ \\ 
        & C-Affine (1) & \result{2.389779666666667}{0.02812879663658265} & $\scriptstyle 3\times 10^6$ & $\scriptstyle 153664$ & $\scriptstyle 1$ \\ 
        & C-Monotonic (1) & \result{1.6721043333333334}{0.08022989393126614} & $\scriptstyle 3.1\times 10^6$ & $\scriptstyle 153664$ & $\scriptstyle 1$ \\ 
        \hline 
        \multirow{2}{*}{(c)} 
        & A-Affine (5)  & \result{1.89}{0.01} & $\scriptstyle 6\times 10^6$  & $\scriptstyle 5\times 306936$ & $\scriptstyle 5 \times 783$\\ 
        & A-Monotonic (5) & \result{1.13}{0.02} & $\scriptstyle 6.6\times 10^6$ & $\scriptstyle 5\times 306936$  & $\scriptstyle 5 \times 783$\\ 
         \hline\hline
    \end{tabular}
    \vspace{1em}
    \caption{Results on MNIST. The negative log-likelihood is reported in bits per pixel on the test set over 3 runs on MNIST, error bars are equal to the standard deviation. The number of edges and the depth of the equivalent Bayesian network is reported. Results are divided into 3 categories: (a) The architectures introduced in this work. (b) Classical single-step architectures. (c) The best performing architectures based on multi-steps autoregressive flows.} \label{tab:MNIST}
\end{table}
\end{minipage}
\begin{minipage}[h]{.01\textwidth}
\hspace{1.\textwidth}
\end{minipage}
\begin{minipage}[h]{.49\textwidth}

\begin{figure}[H]
    \centering
    \includegraphics[width=\textwidth]{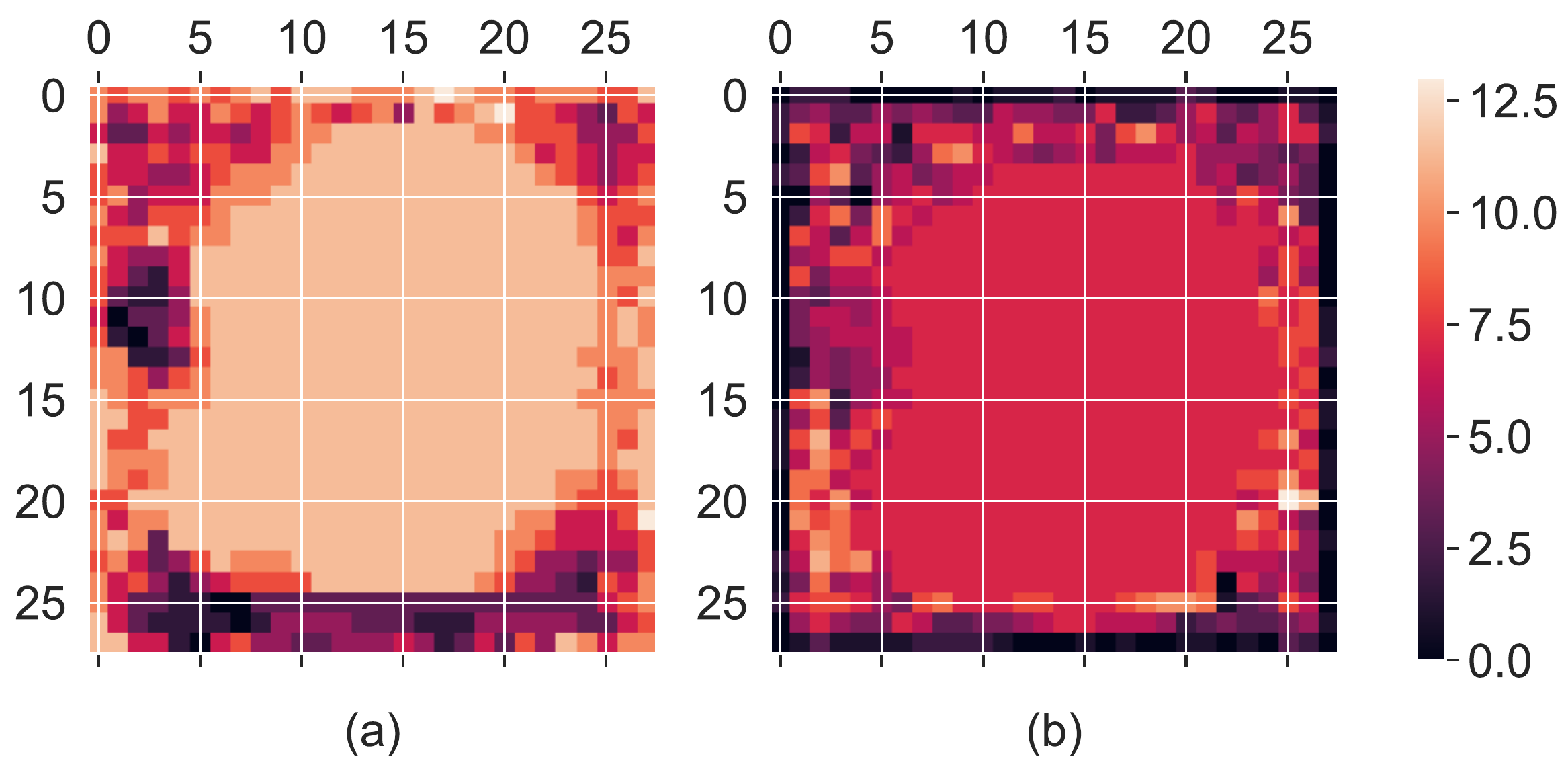}
    \caption{The in (a) and out (b) degrees of the nodes in the equivalent BN learned in the MNIST experiments.} \label{fig:in-out-degrees}
\end{figure}


\end{minipage}
\end{figure}

\section{MNIST density estimation - Training parameters}
For all experiments the batch size was $100$, the learning rate $10^{-3}$, the weight decay $10^{-5}$. For the graphical conditioners the number of epochs between two coefficient updates was chosen to $10$, the greater this number the better were the performance however the longer is the optimization. The CNN is made of 2 layers of 16 convolutions with $3\times 3$ kernels followed by an MLP with two hidden layers of size $2304$ and $128$. The neural network used for the Coupling and the autoregressive conditioner are neural networks with $3 \times 1024$ hidden layers. For all experiments with a monotonic normalizer the size of the embedding was chosen to $30$ and the integral net was made of 3 hidden layers of size $50$.
The models are trained until no improvements of the average log-likelihood on the validation set is observed for 10 consecutive epochs.

\end{document}